\newtheorem{prop}{\textbf{Proposition}}
\newtheorem{theorem}{\textbf{Theorem}}
\newtheorem{lemma}{\textbf{Lemma}}
\newtheorem{corollary}{\textbf{Corollary}}
\newcommand{\myfont}{\fontsize{8pt}{\baselineskip}\selectfont}
\begin{document}

\title{Candidates vs. Noises Estimation for Large Multi-Class Classification Problem}

\author{Lei Han, Yiheng Huang, Tong Zhang\\
Tencent AI Lab, Shenzhen, China \\
}

\date{}

\maketitle

\begin{abstract}
This paper proposes a method for multi-class classification problems, where the number of classes $K$ is large. The method, referred to as {\em Candidates vs. Noises Estimation} (CANE), selects a small subset of candidate classes and samples the remaining classes. We show that CANE is always consistent and computationally efficient. Moreover, the resulting estimator has low statistical variance approaching that of the maximum likelihood estimator, when the observed label belongs to the selected candidates with high probability. In practice, we use a tree structure with leaves as classes to promote fast beam search for candidate selection. We further apply the CANE method to estimate word probabilities in learning large neural language models. Extensive experimental results show that CANE achieves better prediction accuracy over the Noise-Contrastive Estimation (NCE), its variants and a number of the state-of-the-art tree classifiers, while it gains significant speedup compared to standard $\mathcal{O}(K)$ methods.
\end{abstract}

\section{Introduction}
In practice one often encounters multi-class classification problem with a large number of classes. For example, applications in image classification \cite{russakovsky2015imagenet} and language modeling \cite{mikolov2010recurrent} usually have tens to hundreds of thousands of classes. Under such cases, training the standard softmax logistic or one-against-all models becomes impractical.

One promising way to handle the large class size is to use sampling. In language models, a commonly adopted technique is {\em Noise-Contrastive Estimation} (NCE) \cite{gutmann2010noise}. This method is originally proposed for estimating probability densities and has been applied to various language modeling situations, such as learning word embeddings, context generation and neural machine translation \cite{mnih2012fast,mnih2013learning,vaswani2013decoding,sordoni2015neural}. NCE reduces the problem of multi-class classification to binary classification problem, which discriminates between a target class distribution and a noise distribution and a few noise classes are sampled as a representation of the entire noise space. In general, the noise distribution is given a priori. For example, a power-raised unigram distribution has been shown to be effective in language models \cite{mikolov2013distributed,ji2015blackout,mnih2012fast}. Recently, some variants of NCE have been proposed. The \emph{Negative Sampling} \cite{mikolov2013distributed} is a simplified version of NCE that ignores the numerical probabilities in the distributions and discriminates between only the target class and noise samples; the One vs. Each \cite{Titsias2016one} solves a very similar problem motivated by bounding the softmax logistic log-likelihood. Two other variants, {\em BlackOut} \cite{ji2015blackout} and {\em complementary sum sampling} \cite{botev2017complementary}, employ parametric forms of the noise distribution and use sampled noises to approximate the normalization factor. In summary, NCE and its variants use (only) the observed class versus the noises; by sampling the noises, these methods avoid the costly computation of the normalization factor to achieve fast training speed. In this paper, we will generalize the idea by using a subset of classes (which can be automatically learned), called candidate classes, against the remaining noise classes. Compared to NCE, this approach can significantly improve the statistical efficiency when the true class belongs to the candidate classes with high probability.

Another type of popular methods for large class space is the tree structured classifier \cite{beygelzimer2009error,bengio2010label,deng2011fast,choromanska2015logarithmic,daume2016logarithmic,jernite2016simultaneous}. In these methods, a tree structure is defined over the classes which are treated as leaves. Each internal node of the tree is assigned with a local classifier, routing the examples to one of its descendants. Decisions are made from the root until reaching a leaf. Then, the multi-class classification problem is reduced to solving a number of small local models defined by a tree, which typically admits a logarithmic complexity on the total number of classes. Generally, tree classifiers gain training and prediction speed while suffering a loss of accuracy. The performance of tree classifier may rely heavily on the quality of the tree \cite{mnih2009scalable}. Earlier approaches use fixed tree, such as the Filter Tree \cite{beygelzimer2009error} and the Hierarchical Softmax (HSM) \cite{morin2005hierarchical}. Recent methods are able to adjust the tree and learn the local classifiers simultaneously, such as the LOMTree \cite{choromanska2015logarithmic} and Recall Tree \cite{daume2016logarithmic}. Our approach is complementary to these tree classifiers, because we study the orthogonal issue of consistent class sampling, which in principle can be combined with many of these tree methods. In fact, a tree structure will be used in our approach to select a small subset of candidate classes. Since we focus on the class sampling aspect, we do not necessarily employ the best tree construction method in our experiments.

In this paper, we propose a method to efficiently deal with the large class problem by paying attention to a small subset of candidate classes instead of the entire class space. Given a data point $\bm{x}$ (without observing $y$), we select a small number of competitive candidates as a set $\mathcal{C}_{\bm{x}}$. Then, we sample the remaining classes, which are treated as noises, to represent the entire noise space in the large normalization factor. The estimation is referred to as {\em Candidates vs. Noises Estimation} (CANE). 
We show that CANE is consistent and its computation using stochastic gradient method is independent of the class size $K$. Moreover, the statistical variance of the CANE estimator can approach that of the maximum likelihood estimator (MLE) of the softmax logistic regression when $\mathcal{C}_{\bm{x}}$ can cover the target class $y$ with high probability. This statistical efficiency is a key advantage of CANE over NCE, and its effect can be observed in practice.

We then describe two concrete algorithms: the first one is a generic stochastic optimization procedure for CANE; the second one employs a tree structure with leaves as classes to enable fast beam search for candidate selection. We also apply CANE to solve the word probability estimation problem in neural language modeling. Experimental results conducted on both classification and neural language modeling problems show that CANE achieves significant speedup compared to the standard softmax logistic regression. Moreover, it achieves superior performance over NCE, its variants, and a number of the state-of-the-art tree \mbox{classifiers}. 

\section{Candidates vs. Noises Estimation}
Consider a $K$-class classification problem ($K$ is large) with $n$ training examples $(\bm{x}_i, y_i)|_{i=1}^n$, where $\bm{x}_i$ is from an input space $\mathcal{X}$ and $y_i\in\{1,\cdots,K\}$. The softmax logistic regression solves
\begin{align}
	\max_{\bm{\theta}}
	\frac{1}{n}\sum_{i=1}^n\sum_{k=1}^K
	\mathbb{I}(y_i=k)
	\log\frac{e^{s_k(\bm{x}_i, \bm{\theta})} }{\sum_{k'=1}^K e^{s_{k'}(\bm{x}_i,\bm{\theta}) } },
	\label{eq:softmax}
\end{align}
where $s_k(\bm{x},\bm{\theta})$ for $k=1,\cdots,K$ is a model parameterized by $\bm{\theta}$. Solving Eq. (\ref{eq:softmax}) requires computing a score for every class and the summation in the normalization factor, which is very expensive when $K$ is large.

Generally speaking, given $\bm{x}$, only a small number of classes in the entire class space might be competitive to the true class. Therefore, we propose to find a small subset of classes as a candidate set $\mathcal{C}_{\bm{x}}\subset\{1,\cdots,K\}$ and treat the classes outside $\mathcal{C}_{\bm{x}}$ as noises, so that we can focus on the small set $\mathcal{C}_{\bm{x}}$ instead of the entire $K$ classes. We will discuss one way to choose $\mathcal{C}_{\bm{x}}$ in Section \ref{sec:algo}. Denote the remaining $K-|\mathcal{C}_{\bm{x}}|$ noises as a set $\mathcal{N}_{\bm{x}}$, so $\mathcal{N}_{\bm{x}}$ is the complementary set of $\mathcal{C}_{\bm{x}}$. We propose to sample some noise class $j\in \mathcal{N}_{\bm{x}}$ to represent the entire $\mathcal{N}_{\bm{x}}$. That is, we replace the partial summation $\sum_{j\in\mathcal{N}_{\bm{x}}}e^{s_j(\bm{x},\bm{\theta})}$ in the denominator of Eq. (\ref{eq:softmax}) by $e^{s_j(\bm{x},\bm{\theta})}/q_{\bm{x}}(j)$ using some sampled class $j$ with an {\em arbitrary sampling probability} $q_{\bm{x}}(j)$, where $q_{\bm{x}}(j)\in(0,1)$ and $\sum_{j\in\mathcal{N}_{\bm{x}}}q_{\bm{x}}(j)=1$. Thus, the denominator $\sum_{k'=1}^K e^{s_{k'}(\bm{x},\bm{\theta})}$ will be approximated as $\sum_{k'\in \mathcal{C}_{\bm{x}}}e^{s_{k'}(\bm{x},\bm{\theta})}+e^{s_j(\bm{x},\bm{\theta})}/q_{\bm{x}}(j)$.
Given example $(\bm{x},y)$ and its candidate set $\mathcal{C}_{\bm{x}}$, if $y\in\mathcal{C}_{\bm{x}}$, then for some sampled noise class $j$, we will focus on maximizing the approximated probability
\begin{align}
\frac{e^{s_y(\bm{x},\bm{\theta})}}{
	\sum_{k'\in \mathcal{C}_{\bm{x}}}e^{s_{k'}(\bm{x},\bm{\theta})}+e^{s_j(\bm{x},\bm{\theta})}/q_{\bm{x}}(j)
	};
\label{eq:Pin}
\end{align} 
otherwise, if $y\not\in\mathcal{C}_{\bm{x}}$, we maximize
\begin{align}
\frac{e^{s_y(\bm{x},\bm{\theta})}}
	{\sum_{k'\in \mathcal{C}_{\bm{x}}}e^{s_{k'}(\bm{x},\bm{\theta})}+{e^{s_y(\bm{x},\bm{\theta})}}/{q_{\bm{x}}(y)}}
\label{eq:Pout}
\end{align}
alternatively, where $y$ is treated as the sampled noise in place. Now, with Eqs. (\ref{eq:Pin}) and (\ref{eq:Pout}), in expectation, we will need to solve the following objective:
\begin{align}
\text{maximize}\ R(\bm{\theta})=&\ 
\mathbb{E}_{\bm{x}} 
\Bigg[
\sum_{k\in\mathcal{C}_{\bm{x}}}p(y=k|\bm{x})
\sum_{j\in \mathcal{N}_{\bm{x}}}q_{\bm{x}}(j)
\log\frac{e^{s_k(\bm{x},\bm{\theta})}}{
\sum_{k'\in \mathcal{C}_{\bm{x}}}e^{s_{k'}(\bm{x},\bm{\theta})}\!+\!
\frac{e^{s_j(\bm{x},\bm{\theta})}}{q_{\bm{x}}(j)}
}
\nonumber
\\ 
&+\sum_{k\in\mathcal{N}_{\bm{x}}}p(y=k|\bm{x})
\log\frac{e^{s_k(\bm{x},\bm{\theta})}}
{
\sum_{k'\in \mathcal{C}_{\bm{x}}}e^{s_{k'}(\bm{x},\bm{\theta})}+
\frac{e^{s_k(\bm{x},\bm{\theta})}}{q_{\bm{x}}(k)}
} \Bigg],
\label{eq:Exp}
\end{align}
and empirically, we will need to solve
\begin{align}
\text{maximize}\ 
\hat{R}_n(\bm{\theta})=&\ 
\frac{1}{n}\sum_{i=1}^n
\Bigg[ \mathbb{I}(y_i\in \mathcal{C}_{\bm{x}_i})
\sum_{j\in \mathcal{N}_{\bm{x}_i}}q_{\bm{x}_i}(j)
\log
\frac{e^{s_{y_i}(\bm{x}_i,\bm{\theta})}}
{
\sum_{k'\in \mathcal{C}_{\bm{x}_i}}e^{s_{k'}(\bm{x}_i,\bm{\theta})}\!+\!
\frac{e^{s_j(\bm{x}_i,\bm{\theta})}}{q_{\bm{x}_i}(j)}
}
\nonumber
\\
&+\mathbb{I}(y_i \notin \mathcal{C}_{\bm{x}_i})
\log\frac
{e^{s_{y_i}(\bm{x}_i,\bm{\theta})}}
{
\sum_{k'\in \mathcal{C}_{\bm{x}_i}}e^{s_{k'}(\bm{x}_i,\bm{\theta})}+
\frac{e^{s_{y_i}(\bm{x}_i,\bm{\theta})}}{q_{\bm{x}_i}(y_i)}
}
\Bigg].
\label{eq:obj}
\end{align}
Eq. (\ref{eq:obj}) consists of two summations over both the data points and the classes in the noise set $\mathcal{N}_{\bm{x}}$. Therefore, we can employ a `doubly' stochastic gradient optimization method by sampling both data points $i\in \{1,\ldots,n\}$ and noise classes $j \in \mathcal{N}_{\bm{x}_i}$. 
It is not difficult to check that each stochastic gradient is bounded under reasonable conditions, which
means that the computational cost for solving \eqref{eq:obj} using stochastic gradient is independent of the class number $K$.
Since we only choose a small number of candidates in $\mathcal{C}_{\bm{x}}$, the computation for each stochastic gradient in Eq. (\ref{eq:obj}) is efficient. The above method is referred to as Candidates vs. Noises Estimation (CANE). 

\section{Properties}
In this section, we investigate the statistical properties of CANE. The parameter space of the softmax logistic model in Eq. (\ref{eq:softmax}) has redundancy, observing that adding any  function $h(\bm{x})$ to $s_k(\bm{x},\bm{\theta})$ for $k=1,\cdots,K$ will not change the objective. Similar situation happens for Eqs. (\ref{eq:Exp}) and (\ref{eq:obj}). To avoid this redundancy, one can add some constraints on the $K$ scores or simply fix one of them as zero, e.g., let $s_K(\bm{x},\bm{\theta})=0$. To facilitate the analysis, we will fix $s_K(\bm{x},\bm{\theta})=0$ and consider $\mathcal{C}_{\bm{x}}\cup\mathcal{N}_{\bm{x}}=\{1,\cdots,K-1\}$ within this section. First, we have the following result.
\begin{theorem}[\emph{Infinity-Sample Consistency}]\label{theo:NP}
By viewing the objective $R$ as a function of $\{s_1,\cdots,s_{K-1}\}$, $R$ achieves its maximum if and only if $s_k=\log \frac{p(y=k|\bm{x})}{p(y=K|\bm{x})}$ for $k=1,\cdots,K-1$.
\end{theorem}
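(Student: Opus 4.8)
The plan is to use the fact that in Theorem~\ref{theo:NP} the scores $\{s_1,\dots,s_{K-1}\}$ are treated \emph{nonparametrically}, so that the functional $R$ decouples over $\bm{x}$. Writing $R=\mathbb{E}_{\bm{x}}\big[F(\bm{x};s_1(\bm{x}),\dots,s_{K-1}(\bm{x}))\big]$, where the inner expression $F$ depends on the scores only through their values at $\bm{x}$, maximizing $R$ is equivalent to maximizing $F(\bm{x};\cdot)$ separately for (almost) every fixed $\bm{x}$. So I would fix $\bm{x}$, write $p_k=p(y=k\mid\bm{x})$ (assumed strictly positive, since otherwise the softmax family cannot represent the conditional law and the claimed optimum is not even well defined), $q_j=q_{\bm{x}}(j)$, $S_{\mathcal{C}}=\sum_{k\in\mathcal{C}_{\bm{x}}}p_k$, $t_k=e^{s_k}$, $Z_{\mathcal{C}}=\sum_{k\in\mathcal{C}_{\bm{x}}}t_k$, and use the normalization $s_K=0$, i.e. $K\in\mathcal{C}_{\bm{x}}$ and $t_K=1$.

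First I would record that $F(\bm{x};\cdot)$ is concave, in fact strictly concave, in $(s_1,\dots,s_{K-1})$. Every summand of $F$ has the form $s_k-\log\!\big(\sum_{l\in\mathcal{C}_{\bm{x}}}e^{s_l}+e^{\,s_j-\log q_j}\big)$ (with $j$ a noise index, or $j=k$ in the second block), i.e. an affine function of the scores minus a log-sum-exp of affine functions; since log-sum-exp is convex, each summand is concave and $F$, a nonnegative combination, is concave. Strictness comes from the fixed term $e^{s_K}=1$ sitting inside every normalizer: it removes the usual translation-degeneracy of log-sum-exp, so that along any nonzero direction in $\mathbb{R}^{K-1}$ at least one summand — the one whose perturbed coordinate appears among its active variables — is strictly concave. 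Hence any critical point of $F(\bm{x};\cdot)$ is its unique global maximizer.

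Then I would compute $\nabla F$ and locate the critical point. Differentiation in $s_m$ splits into two cases. For $m\in\mathcal{N}_{\bm{x}}$, collecting terms gives $\partial_{s_m}F=\big(p_m Z_{\mathcal{C}}-S_{\mathcal{C}}\,t_m\big)\big/\big(Z_{\mathcal{C}}+t_m/q_m\big)$, which vanishes iff $t_m=p_m Z_{\mathcal{C}}/S_{\mathcal{C}}$. For $m\in\mathcal{C}_{\bm{x}}$ with $m\neq K$, one gets $\partial_{s_m}F=p_m-t_m A$, where $A=S_{\mathcal{C}}\sum_{j\in\mathcal{N}_{\bm{x}}}\frac{q_j}{Z_{\mathcal{C}}+t_j/q_j}+\sum_{j\in\mathcal{N}_{\bm{x}}}\frac{p_j}{Z_{\mathcal{C}}+t_j/q_j}$. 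The one genuinely computational step is to substitute the already-derived relations $t_j=p_jZ_{\mathcal{C}}/S_{\mathcal{C}}$ ($j\in\mathcal{N}_{\bm{x}}$) into $A$: every term collapses via the identity $\frac{S_{\mathcal{C}}q_j^2+p_jq_j}{S_{\mathcal{C}}q_j+p_j}=q_j$, and, using $\sum_{j\in\mathcal{N}_{\bm{x}}}q_j=1$, one obtains $A=S_{\mathcal{C}}/Z_{\mathcal{C}}$. So $t_m=p_mZ_{\mathcal{C}}/S_{\mathcal{C}}$ holds for \emph{every} $m\in\{1,\dots,K-1\}$; imposing $t_K=1$ in $Z_{\mathcal{C}}=\sum_{k\in\mathcal{C}_{\bm{x}}}t_k$ then forces $Z_{\mathcal{C}}=S_{\mathcal{C}}/p_K$, hence $t_m=p_m/p_K$, i.e. $s_m=\log\frac{p(y=m\mid\bm{x})}{p(y=K\mid\bm{x})}$. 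This is the unique critical point, hence by the concavity observation the unique maximizer of $F(\bm{x};\cdot)$; undoing the pointwise reduction gives the ``if and only if'' statement for $R$.

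The main obstacle is exactly that cancellation in $A$: the gradient in the $\mathcal{C}_{\bm{x}}$-directions looks coupled and unwieldy, and the argument hinges on noticing that once the $\mathcal{N}_{\bm{x}}$-direction stationarity is imposed, the weighted sum defining $A$ telescopes to $S_{\mathcal{C}}/Z_{\mathcal{C}}$, after which the $\mathcal{C}_{\bm{x}}$- and $\mathcal{N}_{\bm{x}}$-conditions become the same relation. Everything else — the decoupling over $\bm{x}$, concavity, and the final bookkeeping with $t_K=1$ — is routine. (If one prefers not to invoke strict concavity, the same computation already shows the critical point is unique, and for a differentiable concave function on $\mathbb{R}^{K-1}$ the maximizers are precisely the critical points, which again yields uniqueness.)
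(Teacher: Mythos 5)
Your proof is correct, and it reaches the same two ingredients as the paper's argument --- a stationarity computation plus strict concavity of the per-$\bm{x}$ objective --- but it executes both of them differently. For the critical point, the paper simply substitutes $s_k=\log(p_k/p_K)$ into the explicit gradient formulas and checks that everything vanishes; you instead solve the stationarity system, which costs you the cancellation $A=S_{\mathcal{C}}/Z_{\mathcal{C}}$ (your computation of it is right: with $t_j=p_jZ_{\mathcal{C}}/S_{\mathcal{C}}$ each normalizer becomes $Z_{\mathcal{C}}(S_{\mathcal{C}}q_j+p_j)/(S_{\mathcal{C}}q_j)$ and the sum telescopes via $\sum_j q_j=1$) but buys you uniqueness of the critical point directly. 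For concavity, the paper computes the full Hessian $-\sum_j q_j\frac{D_j}{E_j}\bigl[\mathrm{diag}(\bm{v}_j)-\frac{1}{E_j}\bm{v}_j\bm{v}_j^\top\bigr]$ and argues positive definiteness by Cauchy--Schwarz (a computation it then reuses verbatim for the asymptotic variance in Theorem 3); your observation that every summand is affine-minus-log-sum-exp with the constant $e^{s_K}=1$ inside the normalizer gets ordinary concavity for free and reduces strictness to the same Cauchy--Schwarz fact in disguise. One point to be slightly more careful about: a single summand is strictly concave only along directions that touch its active coordinates $\mathcal{C}_{\bm{x}}\cup\{j\}$, so for a direction supported on a single noise coordinate $j$ you need the total weight $q_jS_{\mathcal{C}}+p_j$ of the summands containing $s_j$ to be positive --- this is exactly where your standing assumption $p_k>0$ enters (the paper makes the same assumption implicitly, and its claim that $\bm{\varphi}^\top\bm{A}_j\bm{\varphi}>0$ for \emph{every} nonzero $\bm{\varphi}$ is likewise only true after summing over $j$ with positive weights). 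With that caveat made explicit, your argument is complete.
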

In Theorem \ref{theo:NP}, the global optima is exactly the log-odds function with class $K$ as the reference class. Now, considering the parametric form $s_k(\bm{x},\bm{\theta})$, there exists a true parameter $\bm{\theta}^*$ so that $s_k(\bm{x},\bm{\theta}^*)=\log \frac{p(y=k|\bm{x})}{p(y=K|\bm{x})}$ if the model $s_k(\bm{x},\bm{\theta})$ is correctly specified. The following theorem shows that the CANE estimator $\hat{\bm{\theta}}=\arg\max_{\bm{\theta}}\hat{R}_n(\bm{\theta})$ is consistent with the true parameter $\bm{\theta}^*$.

\begin{theorem}[\emph{Finite-Sample Asymptotic Consistency}]\label{theo:con}
Given $\bm{x}$, denote $\mathcal{C}_{\bm{x}}$ as $\{i_1,\cdots,i_{|\mathcal{C}_{\bm{x}}|}\}$ and $\mathcal{N}_{\bm{x}}$ as $\{j_1,\cdots,j_{|\mathcal{N}_{\bm{x}}|}\}$. 
Suppose that the parameter space is compact and $\forall \bm{\theta}\neq\bm{\theta}^*$ such that $\mathbb{P}_{\mathcal{X}}\left(s_k(\bm{x},\bm{\theta})\neq s_k(\bm{x},\bm{\theta}^*)\right)>0$ for $\bm{x}\sim\mathcal{X}$, $k\neq K$.
Assume $\|\nabla_{\bm{\theta}}s_k(\bm{x},\bm{\theta})\|$, $\|\nabla_{\bm{\theta}}^2 s_k(\bm{x},\bm{\theta})\|$ and $\|\nabla_{\bm{\theta}}^3 s_k(\bm{x},\bm{\theta})\|$ for $k\neq K$ are bounded under some norm $\|\cdot\|$ defined on the parameter space of $\bm{\theta}$. 
Then, as $n\rightarrow\infty$, the estimator $\hat{\bm{\theta}}$ converges to $\bm{\theta}^*$.
\end{theorem}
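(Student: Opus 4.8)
The plan is to treat $\hat{\bm\theta}=\arg\max_{\bm\theta}\hat R_n(\bm\theta)$ as a standard $M$-estimator and invoke the classical consistency argument for extremum estimators: if $\Theta$ is compact, $\bm\theta\mapsto R(\bm\theta)$ is continuous with $\bm\theta^*$ its \emph{unique} maximizer, and $\sup_{\bm\theta\in\Theta}|\hat R_n(\bm\theta)-R(\bm\theta)|\to 0$ almost surely, then $\hat{\bm\theta}\to\bm\theta^*$ almost surely. Accordingly the proof splits into two parts: (i) a uniform law of large numbers for $\hat R_n$, and (ii) identifiability of $\bm\theta^*$ as the unique maximizer of $R$.

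For part (i), write $\hat R_n(\bm\theta)=\frac1n\sum_{i=1}^n g(\bm x_i,y_i;\bm\theta)$ with $g(\bm x,y;\bm\theta)$ the per-example CANE objective appearing in \eqref{eq:obj}; note that, conditionally on $\bm x$, the factor $\sum_{j\in\mathcal N_{\bm x}}q_{\bm x}(j)(\cdot)$ is a finite deterministic average, so $g$ is well defined and differentiable in $\bm\theta$. Differentiating, $\nabla_{\bm\theta}g$ is a convex-combination-type expression in the vectors $\nabla_{\bm\theta}s_k(\bm x,\bm\theta)$ for $k$ ranging over $\mathcal C_{\bm x}$ and the single sampled noise index; hence $\|\nabla_{\bm\theta}g(\bm x,y;\bm\theta)\|\le cB$, where $B$ bounds $\|\nabla_{\bm\theta}s_k\|$ and $c$ is an absolute constant, so $\bm\theta\mapsto g(\bm x,y;\bm\theta)$ is $L$-Lipschitz on $\Theta$ with $L=cB$ \emph{independent of $(\bm x,y)$}. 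Combined with the pointwise strong LLN $\hat R_n(\bm\theta)\to R(\bm\theta)$ at each fixed $\bm\theta$ (which needs only integrability of $g(\bm x,y;\bm\theta)$, secured by the boundedness and regularity hypotheses), a finite $\varepsilon$-net over the compact set $\Theta$ promotes pointwise to uniform convergence in the usual way, since $\hat R_n$ and $R$ are both $L$-Lipschitz; the same Lipschitz bound and dominated convergence also give continuity of $R$ on $\Theta$.

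For part (ii), I would use Theorem~\ref{theo:NP}: among all measurable choices of scores, $R$ is maximized if and only if $s_k=\log\frac{p(y=k\mid\bm x)}{p(y=K\mid\bm x)}=s_k(\bm x,\bm\theta^*)$ for a.e.\ $\bm x$. Thus $R(\bm\theta^*)$ attains this maximal value, while for any $\bm\theta$ with $\mathbb P_{\mathcal X}(s_k(\bm x,\bm\theta)\neq s_k(\bm x,\bm\theta^*))>0$ for some $k\neq K$, the functions $s_k(\cdot,\bm\theta)$ differ from the log-odds on a set of positive probability, so by the ``only if'' direction $R(\bm\theta)<R(\bm\theta^*)$. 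Under the stated hypothesis this holds for every $\bm\theta\neq\bm\theta^*$, so $\bm\theta^*$ is the unique maximizer of $R$ over $\Theta$. Feeding (i) and (ii) into the extremum-estimator consistency lemma yields $\hat{\bm\theta}\to\bm\theta^*$.

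I expect the main obstacle to be part (i): verifying uniform-in-$(\bm x,y)$ control of $g$ and its gradient, and the integrability needed both for the pointwise LLN and for finiteness/continuity of $R$. The subtle point is that each $\log$-ratio term, although always $\le 0$, needs a matching lower bound; controlling it uses that inside $\mathcal C_{\bm x}$ the numerator $e^{s_k}$ is itself one of the denominator summands and that the reweighted noise term $e^{s_j}/q_{\bm x}(j)$ is tame, so that only the \emph{variation} of the scores over $\Theta$ enters (bounded by $B\cdot\mathrm{diam}(\Theta)$ via the gradient bound), together with integrability of $s_k(\bm x,\bm\theta^*)$ against $p(\cdot\mid\bm x)$. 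The second- and third-derivative bounds on $s_k$ appear not to be needed for consistency proper (they support the later asymptotic-variance analysis), so I would not invoke them here.
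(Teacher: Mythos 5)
Your proposal is correct and follows essentially the same route as the paper: a uniform law of large numbers over the compact parameter space obtained from a per-example Lipschitz bound $\|\nabla_{\bm{\theta}}g\|\le cB$ plus a finite cover (this is exactly the paper's Lemma~\ref{lem:1}), combined with uniqueness of the population maximizer and the standard extremum-estimator argument. The only cosmetic difference is in the identifiability step—where you invoke the ``only if'' direction of Theorem~\ref{theo:NP} pointwise in $\bm{x}$, the paper writes $R(\bm{\theta}^*)-R(\bm{\theta})$ explicitly as the Bregman divergence of a convex log-partition-type function $G(\bm{s})$, which is the same underlying strict-convexity fact—and your remark that the second- and third-derivative bounds are not needed for consistency (only for the asymptotic normality) is also accurate.
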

The above theorem shows that similar to the maximum likelihood estimator of Eq. \eqref{eq:softmax}, the CANE estimator in Eq. \eqref{eq:obj} is also consistent.
Next, we have the asymptotic normality for $\hat{\bm{\theta}}$ as follows.
\begin{theorem}[\emph{Asymptotic Normality}]\label{theo:norm}
Under the same assumption used in Theorem \ref{theo:con}, as $n\rightarrow\infty$, $\sqrt{n}(\hat{\bm{\theta}}-\bm{\theta}^*)$ follows the asymptotic normal distribution:
		\begin{align}
			\sqrt{n}(\hat{\bm{\theta}}-\bm{\theta}^*)\xrightarrow{d}\mathbb{N}(\bm{0},[\mathbb{E}_{\bm{x}}\bm{\nabla}\bm{M}\bm{\nabla}^\top]^{-1}),
		\end{align}
where
\begin{align*}
&\bm{M}=\sum_{j\in\mathcal{N}_{\bm{x}}}q_{\bm{x}}(j)
\Bigg[
diag\left(\mathbf{u}_j\right)-
\frac{1}{p(K,\bm{x})+\sum_{k\in\mathcal{C}_{\bm{x}}}p(k,\bm{x})+
\frac{p(j,\bm{x})}{q_{\bm{x}}(j)}}
\mathbf{u}_j\mathbf{u}_j^\top
\Bigg],
\\
&\mathbf{u}_j=\big(
\underbrace{p(i_1,\bm{x}), \cdots, p(i_{|\mathcal{C}_{\bm{x}}|},\bm{x})}_{\text{The candidate part}}
,
\underbrace{
0, \cdots, {p(j,\bm{x})}/{q_{\bm{x}}(j)}, \cdots, 0}_{\text{The noise part}}
\big)^\top,
\text{ for } j=j_1,\cdots,j_{|\mathcal{N}_{\bm{x}}|},
\\
&\bm{\nabla}=diag\big(\big[\nabla_{\bm{\theta}}s_{i_1}(\bm{x},\bm{\theta}),\cdots,\nabla_{\bm{\theta}}s_{i_{|\mathcal{C}_{\bm{x}}|}}(\bm{x},\bm{\theta}),\nabla_{\bm{\theta}}s_{j_1}(\bm{x},\bm{\theta}),
\cdots,\nabla_{\bm{\theta}}s_{j_{|\mathcal{N}_{\bm{x}}|}}(\bm{x},\bm{\theta})\big]^\top\big).
\end{align*}
\end{theorem}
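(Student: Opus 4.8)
The plan is to treat $\hat{\bm{\theta}}=\arg\max_{\bm{\theta}}\hat R_n(\bm{\theta})$ as a classical M-estimator, where $\hat R_n(\bm{\theta})=\tfrac1n\sum_i m(\bm{x}_i,y_i,\bm{\theta})$ and $m(\bm{x},y,\bm{\theta})$ is the per-example summand of Eq.~\eqref{eq:obj}, and to run the standard Taylor-expansion argument. By Theorem~\ref{theo:con}, $\hat{\bm{\theta}}\to\bm{\theta}^*$ in probability; since the parameter space is compact and $\bm{\theta}^*$ is (by the identifiability hypothesis) the unique maximizer of $R$ and lies in its interior, eventually $\hat{\bm{\theta}}$ is an interior stationary point, so $\nabla_{\bm{\theta}}\hat R_n(\hat{\bm{\theta}})=\bm{0}$. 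Expanding to first order about $\bm{\theta}^*$ gives $\bm{0}=\nabla\hat R_n(\bm{\theta}^*)+[\nabla^2\hat R_n(\tilde{\bm{\theta}})](\hat{\bm{\theta}}-\bm{\theta}^*)$ for some $\tilde{\bm{\theta}}$ on the segment $[\hat{\bm{\theta}},\bm{\theta}^*]$, hence $\sqrt{n}(\hat{\bm{\theta}}-\bm{\theta}^*)=-[\nabla^2\hat R_n(\tilde{\bm{\theta}})]^{-1}\sqrt{n}\,\nabla\hat R_n(\bm{\theta}^*)$. The bounds on $\|\nabla_{\bm{\theta}}s_k\|$ and $\|\nabla^2_{\bm{\theta}}s_k\|$ make $\nabla m(\bm{x},y,\bm{\theta}^*)$ bounded, so the multivariate CLT yields $\sqrt{n}\,\nabla\hat R_n(\bm{\theta}^*)\xrightarrow{d}\mathbb{N}(\bm{0},\bm{V})$ with $\bm{V}=\mathbb{E}[\nabla m(\bm{\theta}^*)\nabla m(\bm{\theta}^*)^{\top}]$, the mean being $\nabla R(\bm{\theta}^*)=\bm{0}$ because $\bm{\theta}^*$ maximizes $R$ by Theorem~\ref{theo:NP}. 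The bound on $\|\nabla^3_{\bm{\theta}}s_k\|$ makes $\bm{\theta}\mapsto\nabla^2 m(\bm{x},y,\bm{\theta})$ Lipschitz with an integrable envelope, so a uniform law of large numbers together with $\tilde{\bm{\theta}}\to\bm{\theta}^*$ gives $\nabla^2\hat R_n(\tilde{\bm{\theta}})\xrightarrow{p}\bm{H}:=\nabla^2 R(\bm{\theta}^*)$. Slutsky then gives $\sqrt{n}(\hat{\bm{\theta}}-\bm{\theta}^*)\xrightarrow{d}\mathbb{N}(\bm{0},\bm{H}^{-1}\bm{V}\bm{H}^{-1})$, and it remains to identify $\bm{H}$ and $\bm{V}$ and to show the sandwich collapses.

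For $\bm{H}$ I would differentiate $R$ in Eq.~\eqref{eq:Exp} twice, keeping the reference class $K$ inside every normalizer (as is implicit in this section; cf.\ the term $p(K,\bm{x})$ in $\bm{M}$), i.e.\ writing $Z_{\bm{x}}(j)=1+\sum_{k\in\mathcal{C}_{\bm{x}}}e^{s_k}+e^{s_j}/q_{\bm{x}}(j)$, so that each contributing term is $(\text{weight})\cdot(s_k-\log Z_{\bm{x}}(j))$ up to a $\bm{\theta}$-free constant and the total coefficient of $-\log Z_{\bm{x}}(j)$ is exactly $D_{\bm{x}}(j):=p(K,\bm{x})+\sum_{k\in\mathcal{C}_{\bm{x}}}p(k,\bm{x})+p(j,\bm{x})/q_{\bm{x}}(j)$. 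Theorem~\ref{theo:NP} gives $e^{s_k(\bm{x},\bm{\theta}^*)}=p(k,\bm{x})/p(K,\bm{x})$, so at $\bm{\theta}^*$ the softmax weights $e^{s_l}/Z_{\bm{x}}(j)$ for $l\neq K$ equal the entries of $\mathbf{u}_j/D_{\bm{x}}(j)$; substituting these values, the $\nabla^2_{\bm{\theta}}s_k$ contributions cancel termwise, and because the coefficient $D_{\bm{x}}(j)$ cancels the $D_{\bm{x}}(j)^{-1}$ appearing in the softmax information, the Hessian of the $j$-block collapses to $-\bm{\nabla}\bm{M}_j\bm{\nabla}^{\top}$ with $\bm{M}_j=\mathrm{diag}(\mathbf{u}_j)-D_{\bm{x}}(j)^{-1}\mathbf{u}_j\mathbf{u}_j^{\top}$. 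Taking $\mathbb{E}_{\bm{x}}\sum_{j\in\mathcal{N}_{\bm{x}}}q_{\bm{x}}(j)$ yields $\bm{H}=-\mathbb{E}_{\bm{x}}[\bm{\nabla}\bm{M}\bm{\nabla}^{\top}]$ with $\bm{M}=\sum_j q_{\bm{x}}(j)\bm{M}_j$ as in the statement.

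The crux --- and the step I expect to be the main obstacle --- is to prove $\bm{V}=-\bm{H}$, a Bartlett-type second-moment identity for this non-likelihood criterion. The conceptual key is a reparametrization: introduce the auxiliary pair $(j,y)$ generated as in CANE (draw $y\sim p(\cdot|\bm{x})$; if $y\in\mathcal{C}_{\bm{x}}\cup\{K\}$ draw the designated noise $j\sim q_{\bm{x}}$, otherwise set $j=y$) and note that, up to a $\bm{\theta}$-free additive constant, $\log(e^{s_y}/Z_{\bm{x}}(j))=\log\tilde p^{\bm{\theta}}_j(y)$ where $\tilde p^{\bm{\theta}}_j$ is the softmax model on the reduced label set $\mathcal{C}_{\bm{x}}\cup\{K\}\cup\{j\}$ with the $j$-coordinate tilted by $1/q_{\bm{x}}(j)$. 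One then checks the decisive fact that at $\bm{\theta}^*$ the conditional law of $y$ given $j$ under this generating process is exactly $\tilde p^{\bm{\theta}^*}_j$; consequently, for each fixed $\bm{x}$ and $j$, $\nabla_{\bm{\theta}}\log\tilde p^{\bm{\theta}}_j(y)|_{\bm{\theta}^*}$ has zero conditional mean and conditional covariance equal to the reduced model's Fisher information $D_{\bm{x}}(j)^{-1}\bm{\nabla}\bm{M}_j\bm{\nabla}^{\top}$, so the averaged version reproduces $-\bm{H}$. The genuinely delicate bookkeeping is that the empirical score $\nabla m(\bm{x},y,\bm{\theta}^*)$ uses the full average over $j$ on the event $\{y\in\mathcal{C}_{\bm{x}}\cup\{K\}\}$ but only the single term $j=y$ on $\{y\in\mathcal{N}_{\bm{x}}\}$; squaring and taking $\mathbb{E}_{y|\bm{x}}$ produces cross-terms from the two branches that must recombine into $\sum_j q_{\bm{x}}(j)D_{\bm{x}}(j)^{-1}\bm{\nabla}\bm{M}_j\bm{\nabla}^{\top}$, using $\sum_j q_{\bm{x}}(j)=1$ and the identity $p(K,\bm{x})+\sum_{k\in\mathcal{C}_{\bm{x}}}p(k,\bm{x})+p(y,\bm{x})/q_{\bm{x}}(y)=D_{\bm{x}}(y)$ for $y\in\mathcal{N}_{\bm{x}}$. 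Once $\bm{V}=-\bm{H}$ is established we get $\bm{H}^{-1}\bm{V}\bm{H}^{-1}=-\bm{H}^{-1}=[\mathbb{E}_{\bm{x}}\bm{\nabla}\bm{M}\bm{\nabla}^{\top}]^{-1}$; invertibility of this limit follows from the identifiability assumption together with each $\bm{M}_j$ being positive semidefinite by Cauchy--Schwarz (since $D_{\bm{x}}(j)\ge\sum_l(\mathbf{u}_j)_l$), which completes the argument.
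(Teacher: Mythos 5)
Your skeleton --- consistency plus an interior stationary point, mean-value expansion, CLT for the score, uniform convergence of the Hessian via the third-derivative bound, Slutsky, then the sandwich $\bm{H}^{-1}\bm{V}\bm{H}^{-1}$ --- is exactly the paper's, and your identification of $\bm{H}=-\mathbb{E}_{\bm{x}}\bm{\nabla}\bm{M}\bm{\nabla}^\top$ (the first-order terms killing the $\nabla^2_{\bm{\theta}}s_k$ contributions at $\bm{\theta}^*$, the softmax weights reducing to the entries of $\mathbf{u}_j/D_{\bm{x}}(j)$) agrees with Eq.~(\ref{eq:Hstar}). Where you genuinely depart from the paper is the identity $\bm{V}=-\bm{H}$: the paper grinds out every block $\mathbb{V}_{ii},\mathbb{V}_{ii'},\mathbb{V}_{jj},\mathbb{V}_{ij}$ by hand and pattern-matches against the Hessian, whereas you propose a Bartlett-type argument via the reduced tilted softmax. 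Your key observation there is correct and illuminating: under the generating process you describe, $P(j)=q_{\bm{x}}(j)D_{\bm{x}}(j)$ and the law of $y$ given $j$ at $\bm{\theta}^*$ is exactly $\tilde p^{\bm{\theta}^*}_j$, so the per-pair score $\ell_j(y)$ obeys the information identity and $\mathbb{E}_{(j,y)}[\ell_j\ell_j^\top]=\sum_j q_{\bm{x}}(j)\bm{\nabla}\bm{M}_j\bm{\nabla}^\top=-\bm{H}$. This explains \emph{why} the paper's entrywise match occurs.

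However, the step you defer as ``delicate bookkeeping'' is precisely where the argument breaks, and it does not come for free. The per-example score of $\hat R_n$ as written is $S(y)=\mathbb{E}[\ell_j(y)\mid y]$ (full $q$-average over $j$ when $y\in\mathcal{C}_{\bm{x}}\cup\{K\}$, the single term $j=y$ otherwise), so by the law of total variance $\mathbb{E}_y[SS^\top]=\mathbb{E}_{(j,y)}[\ell_j\ell_j^\top]-\mathbb{E}_y[\mathrm{Cov}(\ell_j(y)\mid y)]\preceq-\bm{H}$, with equality only if $\ell_j(y)$ is almost surely constant in $j$ given $y$. It is not: for $y\in\mathcal{C}_{\bm{x}}\cup\{K\}$ the $i$-th score coordinate is $\mathbb{I}(y=i)-p(i,\bm{x})/D_{\bm{x}}(j)$, which varies with $j$ whenever $D_{\bm{x}}(j)$ is non-constant over $\mathcal{N}_{\bm{x}}$; a direct computation of $\mathbb{E}_y[S_i^2]$ shows a deficit proportional to $\sum_jq_{\bm{x}}(j)/D_{\bm{x}}(j)^2-\bigl(\sum_jq_{\bm{x}}(j)/D_{\bm{x}}(j)\bigr)^2\ge 0$, strict by Cauchy--Schwarz unless $D_{\bm{x}}(j)$ is constant. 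So the recombination you assert fails for the $q$-averaged objective. The identity $\bm{V}=-\bm{H}$ holds for the score of the pair $(y,j)$ with the noise actually drawn once per example --- which is what the paper's computation implicitly does (this is the role of its Bernoulli variables $Q_j$) and what Algorithm~\ref{algo:generic} implements. To close the gap you must either adopt that sampled-noise formulation, at which point your reduced-model observation finishes the proof in one line, or carry out the entrywise variance computation explicitly and confront the discrepancy.
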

 
Theorem~\ref{theo:norm} shows that the CANE method has a statistical variance of $[\mathbb{E}_{\bm{x}}\bm{\nabla}\bm{M}\bm{\nabla}^\top]^{-1}$. As we will see in the next corollary, if one can successfully choose the candidate set $\mathcal{C}_{\bm{x}}$ so that it covers the observed label $y$ with high probability, then the difference between the statistical variance of CANE and that of Eq. \eqref{eq:softmax} is small. Therefore, choosing a good candidate set can be important for practical applications. Moreover, under standard conditions, the computation of CANE using stochastic gradient is independent of the class size $K$ because the variance of stochastic gradient is bounded.

\begin{corollary}[\emph{Low Statistical Variance}]\label{theo:IV}
The variance of the maximum likelihood estimator for the softmax logistic regression in Eq. \eqref{eq:softmax} has the form $[\mathbb{E}_{\bm{x}}\bm{\nabla}\bm{M}^{mle}\bm{\nabla}^\top]^{-1}$. If $\sum_{k\in\mathcal{C}_{\bm{x}}\cup\{K\}}p(k,\bm{x})\rightarrow1$, i.e., the probability that $\mathcal{C}_{\bm{x}}\cup\{K\}$ covers the observed class label $y$ approaches 1, then 
\[
[\mathbb{E}_{\bm{x}}\bm{\nabla}\bm{M}\bm{\nabla}^\top]^{-1}\to [\mathbb{E}_{\bm{x}}\bm{\nabla}\bm{M}^{mle}\bm{\nabla}^\top]^{-1} .
\]
\end{corollary}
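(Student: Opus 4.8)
The plan is to reduce everything to a single pointwise statement --- that $\bm{M}\to\bm{M}^{mle}$ as $\bm{x}$-wise functions when $\sum_{k\in\mathcal{C}_{\bm{x}}\cup\{K\}}p(k,\bm{x})\to1$ --- and then transport this limit through the expectation $\mathbb{E}_{\bm{x}}[\cdot]$ and through the matrix inversion. Fix the coordinate ordering of Theorem~\ref{theo:norm} (candidates $i_1,\dots,i_{|\mathcal{C}_{\bm{x}}|}$, then noises $j_1,\dots,j_{|\mathcal{N}_{\bm{x}}|}$) and let $\mathbf{p}=\mathbf{p}(\bm{x})$ be the vector with these components $p(i_\ell,\bm{x})$ and $p(j_m,\bm{x})$; the softmax Fisher information in score space is $\bm{M}^{mle}=diag(\mathbf{p})-\mathbf{p}\mathbf{p}^\top$ (classical, and also obtainable by specializing the derivation behind Theorem~\ref{theo:norm}). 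The first observation is an exact algebraic cancellation: since $\sum_{j\in\mathcal{N}_{\bm{x}}}q_{\bm{x}}(j)=1$ and the candidate block of $\mathbf{u}_j$ is $j$-independent while its noise block equals $p(j,\bm{x})/q_{\bm{x}}(j)$ at coordinate $j$ and zero elsewhere, one checks $\sum_{j\in\mathcal{N}_{\bm{x}}}q_{\bm{x}}(j)\mathbf{u}_j=\mathbf{p}$ and hence $\sum_{j\in\mathcal{N}_{\bm{x}}}q_{\bm{x}}(j)\,diag(\mathbf{u}_j)=diag(\mathbf{p})$. Consequently
\[
\bm{M}-\bm{M}^{mle}=\mathbf{p}\mathbf{p}^\top-\sum_{j\in\mathcal{N}_{\bm{x}}}\frac{q_{\bm{x}}(j)}{D_j}\,\mathbf{u}_j\mathbf{u}_j^\top,\qquad D_j=p(K,\bm{x})+\sum_{k\in\mathcal{C}_{\bm{x}}}p(k,\bm{x})+\frac{p(j,\bm{x})}{q_{\bm{x}}(j)},
\]
so only the rank-one terms must be controlled.

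Next I would take the pointwise limit. Write $\epsilon_{\bm{x}}=\sum_{j\in\mathcal{N}_{\bm{x}}}p(j,\bm{x})=1-\sum_{k\in\mathcal{C}_{\bm{x}}\cup\{K\}}p(k,\bm{x})$, so the hypothesis says $\epsilon_{\bm{x}}\to0$. For a fixed sampling scheme $q$ this forces $p(j,\bm{x})/q_{\bm{x}}(j)\le\epsilon_{\bm{x}}/q_{\bm{x}}(j)\to0$, hence $\mathbf{u}_j\to\mathbf{u}:=(p(i_1,\bm{x}),\dots,p(i_{|\mathcal{C}_{\bm{x}}|},\bm{x}),0,\dots,0)^\top$, $\mathbf{p}\to\mathbf{u}$, and $D_j\to p(K,\bm{x})+\sum_{k\in\mathcal{C}_{\bm{x}}}p(k,\bm{x})=\sum_{k\in\mathcal{C}_{\bm{x}}\cup\{K\}}p(k,\bm{x})\to1$. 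Plugging these into the displayed identity and using $\sum_{j}q_{\bm{x}}(j)=1$ once more gives $\bm{M}-\bm{M}^{mle}\to\mathbf{u}\mathbf{u}^\top-\mathbf{u}\mathbf{u}^\top=\bm{0}$ for every $\bm{x}$.

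To finish I would lift the pointwise limit to the expectation and then invert. Each entry of $\bm{M}-\bm{M}^{mle}$ is uniformly bounded (using $D_j\ge p(j,\bm{x})/q_{\bm{x}}(j)$ the noise-block and mixed entries of the rank-one sum are $\le1$ in absolute value, and $\mathbf{p}\mathbf{p}^\top$ has entries in $[0,1]$), and by the assumed bound on $\|\nabla_{\bm{\theta}}s_k(\bm{x},\bm{\theta})\|$ the matrix $\bm{\nabla}$ has bounded norm; hence $\|\bm{\nabla}(\bm{M}-\bm{M}^{mle})\bm{\nabla}^\top\|\le\|\bm{\nabla}\|^2\|\bm{M}-\bm{M}^{mle}\|$ is dominated and tends to $0$ pointwise, so dominated convergence yields $\mathbb{E}_{\bm{x}}\bm{\nabla}\bm{M}\bm{\nabla}^\top\to\mathbb{E}_{\bm{x}}\bm{\nabla}\bm{M}^{mle}\bm{\nabla}^\top$. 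The latter is the softmax Fisher information, which is positive definite --- hence invertible --- under the identifiability and regularity assumptions of Theorem~\ref{theo:norm} (the same non-degeneracy that makes the CANE covariance $[\mathbb{E}_{\bm{x}}\bm{\nabla}\bm{M}\bm{\nabla}^\top]^{-1}$ meaningful). Since $A\mapsto A^{-1}$ is continuous on the set of invertible matrices, $[\mathbb{E}_{\bm{x}}\bm{\nabla}\bm{M}\bm{\nabla}^\top]^{-1}\to[\mathbb{E}_{\bm{x}}\bm{\nabla}\bm{M}^{mle}\bm{\nabla}^\top]^{-1}$, as claimed.

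The step I expect to be the main obstacle is the vanishing of the noise weights $p(j,\bm{x})/q_{\bm{x}}(j)$: because $q_{\bm{x}}$ is an \emph{arbitrary} sampling distribution, the limit $\sum_{k\in\mathcal{C}_{\bm{x}}\cup\{K\}}p(k,\bm{x})\to1$ must be read with $q$ held fixed (and, to make the expectation step clean, with $q_{\bm{x}}$ bounded away from $0$ on $\mathcal{N}_{\bm{x}}$ and $\epsilon_{\bm{x}}\to0$ in a dominated sense over $\bm{x}$); otherwise the noise block of $\mathbf{u}_j$ and the gap between $D_j$ and the candidate mass need not disappear, and $\bm{M}$ need not approach $\bm{M}^{mle}$. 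The algebraic cancellation, the pointwise limit, the uniform bounds, and the continuity of inversion are then all routine.
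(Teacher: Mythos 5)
Your proposal is correct and follows essentially the same route as the paper: both arguments reduce the claim to the pointwise entrywise limit $\bm{M}-\bm{M}^{mle}\to\bm{0}$ under $\sum_{j\in\mathcal{N}_{\bm{x}}}p(j,\bm{x})\to 0$ and $D_j\to 1$ (the paper checks this block by block on $\bm{\Delta}=\bm{M}-\bm{M}^{mle}$, while you first cancel the diagonal parts exactly via $\sum_j q_{\bm{x}}(j)\,diag(\mathbf{u}_j)=diag(\mathbf{p})$ and then control only the rank-one terms, which is a tidier bookkeeping of the same computation). Your added care about $q_{\bm{x}}$ being held fixed and bounded away from zero, and the explicit dominated-convergence and continuity-of-inversion steps, address points the paper leaves implicit (its $\bm{\Delta}_3$ entries $p_j^2(1-1/q_j)$ vanish only under the same proviso), so nothing is missing.
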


\section{Algorithm}\label{sec:algo}
In this section, we propose two algorithms. The first one is a general optimization procedure for CANE. The second implementation provides an efficient way to select a competitive set $\mathcal{C}_{\bm{x}}$ using a tree structure defined on the classes.
\subsection{A General Optimization Algorithm}

\begin{algorithm}[t]
\small
\caption{A general optimization procedure for CANE.}
\label{algo:generic}
\begin{algorithmic}[1]
\STATE \textbf{Input:} $K$, $(\bm{x}_i, y_i)|_{i=1}^n$, number of candidates $N_c=|\mathcal{C}_{\bm{x}}|$, number of sampled noises $N_n=|T_{\bm{x}}|$, sampling strategy $\bm{q}$ and learning rate $\eta$.
\STATE \textbf{Output:} $\hat{\bm{\theta}}$.
\vskip 0.1in
\STATE Initialize $\bm{\theta}$;
\FOR {every sampled example}
\STATE Receive example $(\bm{x},y)$;
\STATE Find the candidate set $\mathcal{C}_{\bm{x}}$;
\IF {$y\in\mathcal{C}_{\bm{x}}$}
\STATE Sample $N_n$ noises outside $\mathcal{C}_{\bm{x}}$ according to $\bm{q}$ and denote the selected noise set as $T_{\bm{x}}$;
\STATE $\bm{\theta}\leftarrow\bm{\theta}+\eta\nabla_{\bm{\theta}}\hat{R}$ with
$\nabla_{\bm{\theta}}\hat{R}$ given by
\begin{align}
\nabla_{\bm{\theta}}s_y(\bm{x},\bm{\theta})-\frac{1}{|T_{\bm{x}}|}\sum_{j\in T_{\bm{x}}}
\left[
\frac{
\sum_{k'\in\mathcal{C}_{\bm{x}}}
e^{s_{k'}(\bm{x},\bm{\theta})}\nabla_{\bm{\theta}}s_{k'}(\bm{x},\bm{\theta})+
\frac{e^{s_j(\bm{x},\bm{\theta})}
}{q_{\bm{x}}(j)}\nabla_{\bm{\theta}}s_j(\bm{x},\bm{\theta})
}
{
\sum_{k'\in\mathcal{C}_{\bm{x}}}e^{s_{k'}(\bm{x},\bm{\theta})}+
\frac{e^{s_j(\bm{x},\bm{\theta})}}{q_{\bm{x}}(j)}
}\right];
\label{eq:practice1}
\end{align}
\ELSE 
\STATE $\bm{\theta}\leftarrow\bm{\theta}+\eta\nabla_{\bm{\theta}}\hat{R}$ with 
$\nabla_{\bm{\theta}}\hat{R}$ given by
{\small
\begin{align}
\nabla_{\bm{\theta}}s_y(\bm{x},\bm{\theta})-
\frac{
\sum_{k'\in\mathcal{C}_{\bm{x}}}
e^{s_{k'}(\bm{x},\bm{\theta})}\nabla_{\bm{\theta}}s_{k'}(\bm{x},\bm{\theta})+
\frac{e^{s_y(\bm{x},\bm{\theta})}}{q_{\bm{x}}(y)}\nabla_{\bm{\theta}}s_y(\bm{x},\bm{\theta})
}
{
\sum_{k'\in\mathcal{C}_{\bm{x}}}e^{s_{k'}(\bm{x},\bm{\theta})}+
\frac{e^{s_y(\bm{x},\bm{\theta})}}{q_{\bm{x}}(y)}
};
\label{eq:practice2}
\end{align}
}
\ENDIF
\ENDFOR
\end{algorithmic}
\end{algorithm}

Eq. (\ref{eq:obj}) suggests an efficient algorithm using a `doubly' stochastic gradient descend (SGD) method by sampling both the data points and classes. That is, by sampling a data point $(\bm{x},y)$, we find the candidate set $\mathcal{C}_{\bm{x}}\subset\{1,\cdots,K\}$. If $y\in\mathcal{C}_{\bm{x}}$, we sample $N_n$ noises from $\mathcal{N}_{\bm{x}}$ according to $q_{\bm{x}}$ and denote the selected noises as a set $T_{\bm{x}}$ ($|T_{\bm{x}}|=N_n$). We then optimize
\begin{align*}
\frac{1}{|T_{\bm{x}}|}
\sum_{j\in T_{\bm{x}}}
\log\frac{e^{s_y(\bm{x},\bm{\theta})}}
{
\sum_{k'\in\mathcal{C}_{\bm{x}}}e^{s_{k'}(\bm{x},\bm{\theta})}+e^{s_j(\bm{x},\bm{\theta})}/q_{\bm{x}}(j)
},
\end{align*}
with gradient $\nabla_{\bm{\theta}}\hat{R}$ given by Eq. \eqref{eq:practice1}.
Otherwise, if $y\not\in\mathcal{C}_{\bm{x}}$, we optimize
\begin{align*}
\log\frac{e^{s_y(\bm{x},\bm{\theta})}}
{
\sum_{k'\in\mathcal{C}_{\bm{x}}}e^{s_{k'}(\bm{x},\bm{\theta})}+e^{s_y(\bm{x},\bm{\theta})}/q_{\bm{x}}(y)
},
\end{align*}
with gradient $\nabla_{\bm{\theta}}\hat{R}$ given by Eq. \eqref{eq:practice2}.
This general procedure is provided in Algorithm \ref{algo:generic}. Algorithm \ref{algo:generic} has a complexity of $\mathcal{O}(N_c+N_n)$ (where $N_c=|\mathcal{C}_{\bm{x}}|$), which is independent of the class size $K$. In step 6, any method can be used to select $\mathcal{C}_{\bm{x}}$.

\subsection{Beam Tree Algorithm}

In the second algorithm, we provide an efficient way to find a competitive $\mathcal{C}_{\bm{x}}$. An attractive strategy is to use a tree defined on the classes, because one can perform fast heuristic search algorithms based on a tree structure to prune the uncompetitive classes. Indeed, any structure, e.g., graph or groups, can be used alternatively as long as the structure allows to efficiently prune uncompetitive classes. We will use tree structure for candidate selection in this paper.

Given a tree structure defined on the $K$ classes, the model $s_k(\bm{x},\bm{\theta})$ is interpreted as a tree model illustrated in Fig.~\ref{fig:model}. For simplicity, Fig.~\ref{fig:model} uses a binary tree over $K=8$ labels as example while any tree structure can be used for selecting $\mathcal{C}_{\bm{x}}$. In the example, circles denote internal nodes and squares indicate classes. The parameters are kept in the edges and denoted as $\bm{\theta}_{(o,c)}$, where $o$ indicates an internal node and $c$ is the index of the $c$-th child of node $o$. Therefore, a pair $(o,c)$ represents an edge from node $o$ to its $c$-th child. The dashed circles indicate that we do not keep any parameters in the internal nodes. 
\begin{figure}
\centering
\subfigure{
\includegraphics[scale=0.8]{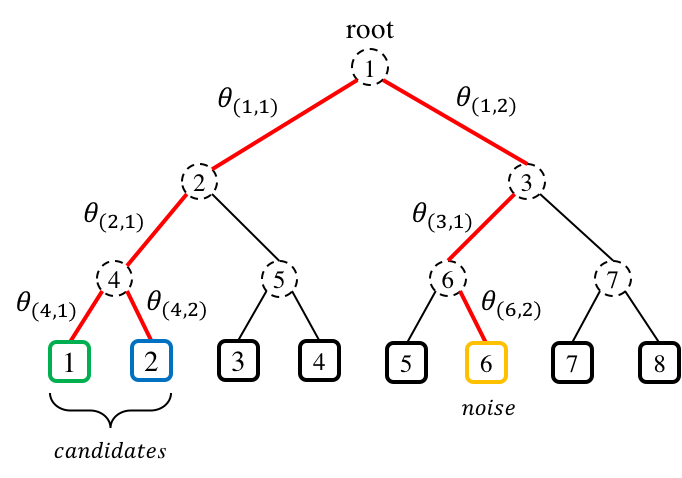}
}\vskip -0.1in
\caption{Illustration of the tree model. Suppose an example $(\bm{x},2)$ is arriving, and two candidate classes 1 and 2 are selected by beam search. The class 6 is sampled as noise.}
\vskip -0.1in
\label{fig:model}
\end{figure}
Now, define $s_k(\bm{x},\bm{\theta})$ as
\begin{align}
	s_k(\bm{x},\bm{\theta})=g_{\bm{\psi}}(\bm{x})\cdot\sum_{(o,c)\in\mathcal{P}_k}\bm{\theta}_{(o,c)},
	\label{eq:h}
\end{align}
where $g_{\bm{\psi}}(\bm{x})$ is a function parameterized by $\bm{\psi}$ and it maps the input $\bm{x}\sim\mathcal{X}$ to a representation $g_{\bm{\psi}}(\bm{x})\in\mathbb{R}^{d_r}$ for some $d_r$. For example, in image classification, a good choice of the representation $g_{\bm{\psi}}(\bm{x})$ of the raw pixels $\bm{x}$ is usually a deep neural network. $\mathcal{P}_k$ denotes the path from the root to the class $k$. Eq. (\ref{eq:h}) implies that the score of an example belonging to a class is calculated by summing up the scores along the corresponding path. Now, in Fig. \ref{fig:model}, suppose that we are given an example $(\bm{x},y)$ with class $y=2$ (blue). Using beam search, we find two candidates with high scores, i.e., class 1 (green) and class 2. Then, we let $\mathcal{C}_{\bm{x}}=\{1, 2\}$. In this case, we have $y\in\mathcal{C}_{\bm{x}}$, so we need to sample noises. Suppose we sample one class 6 (orange). According to Eq. (\ref{eq:practice1}), the parameters along the corresponding paths (red) will be updated.

\begin{algorithm}[t]
\small
\caption{The Beam Tree Algorithm.}
\label{algo:tree}
\begin{algorithmic}[1]
\STATE \textbf{Input:} $K$, $(\bm{x}_i, y_i)|_{i=1}^n$, representation function $g_{\bm{\psi}}(\bm{x})$, number of candidates $N_c=|\mathcal{C}_{\bm{x}}|$, number of sampled noises $N_n=|T_{\bm{x}}|$, sampling strategy $\bm{q}$ and learning rate $\eta$.
\STATE \textbf{Output:} $\hat{\bm{\theta}}$.
\vskip 0.1in
\STATE Construct a tree on the $K$ classes;
\STATE Initialize $\bm{\theta}$;
\FOR {every sampled example}
\STATE Receive example $(\bm{x},y)$;
\STATE Given $\bm{x}$, use beam search to find the $N_c$ classes with high scores to compose $\mathcal{C}_{\bm{x}}$;
\IF {$y\in\mathcal{C}_{\bm{x}}$}
\STATE Sample $N_n$ noises outside $\mathcal{C}_{\bm{x}}$ according to $\bm{q}$ and denote the selected noise set as $T_{\bm{x}}$;
\STATE Find the paths with respect to the classes in $\mathcal{C}_{\bm{x}}\cup T_{\bm{x}}$;
\ELSE 
\STATE Find the paths with respect to the classes in $\mathcal{C}_{\bm{x}}\cup\{y\}$; 
\ENDIF
\STATE Sum up the scores along each selected path for the corresponding class;
\STATE $\bm{\theta}_{(o,c)} \leftarrow \bm{\theta}_{(o,c)} + \eta\frac{\partial \hat{R}}{\partial \bm{\theta}_{(o,c)}}$ for each $(o, c)$ included in the selected paths according to Eqs. (\ref{eq:grad1}) and (\ref{eq:grad2});
\STATE $\bm{\psi}\leftarrow \bm{\psi} + \eta\frac{\partial \hat{R}}{\partial g}\frac{\partial g}{\partial\bm{\psi}}$; // \emph{if $g$ is parameterized.}
\ENDFOR
\end{algorithmic}
\end{algorithm}

Formally, given example $(\bm{x},y)$, if $y\in\mathcal{C}_{\bm{x}}$, we sample noises as a set $T_{\bm{x}}$. Then for $(o,c)\in\mathcal{P}_{\mathcal{C}_{\bm{x}}\cup T_{\bm{x}}}$, where $\mathcal{P}_{\mathcal{C}_{\bm{x}}\cup T_{\bm{x}}}=\bm{\cup}_{k\in\mathcal{C}_{\bm{x}}\cup T_{\bm{x}}}\mathcal{P}_k$, the gradient with respect to $\bm{\theta}_{(o,c)}$ is
\begin{align}
\frac{\partial \hat{R}}{\partial \bm{\theta}_{(o,c)}}
=\frac{1}{|T_{\bm{x}}|}
\sum_{j\in T_{\bm{x}}}
\Bigg[
\mathbb{I}\left((o,c)\in\mathcal{P}_{y} \right)-
\frac{
\sum_{k'\in\mathcal{C}_{\bm{x}}} \mathbb{I}((o,c)\in \mathcal{P}_{k'}) e^{s_{k'}(\bm{x},\bm{\theta})}+
\mathbb{I}((o,c)\in \mathcal{P}_j)\frac{e^{s_j(\bm{x},\bm{\theta})}}{q_{\bm{x}}(j)}
}
{
\sum_{k'\in\mathcal{C}_{\bm{x}}}e^{s_{k'}(\bm{x},\bm{\theta})}+\frac{e^{s_j(\bm{x},\bm{\theta})}}{q_{\bm{x}}(j)}
}
\Bigg]
g_{\bm{\psi}}(\bm{x}).
\label{eq:grad1}
\end{align}
Note that an edge may be included in multiple selected paths. For example, $\mathcal{P}_1$ and $\mathcal{P}_2$ share edges $(1,1)$ and $(2,1)$ in Fig. \ref{fig:model}. The case of $y\not\in \mathcal{C}_{\bm{x}}$ can be illustrated similarly. The gradient with respect to $\bm{\theta}_{(o,c)}$ when $y\not\in\mathcal{C}_{\bm{x}}$ is
\begin{align}
\frac{\partial \hat{R}}{\partial \bm{\theta}_{(o,c)}}=
\Bigg[
\mathbb{I}\left((o,c)\in\mathcal{P}_{y} \right)- 
\frac{
\sum_{k'\in\mathcal{C}_{\bm{x}}} \mathbb{I}((o,c)\in \mathcal{P}_{k'}) e^{s_{k'}(\bm{x},\bm{\theta})}+
\mathbb{I}((o,c)\in \mathcal{P}_y)\frac{e^{s_y(\bm{x},\bm{\theta})}}{q_{\bm{x}}(y)}
}
{
\sum_{k'\in\mathcal{C}_{\bm{x}}}e^{s_{k'}(\bm{x},\bm{\theta})}+\frac{e^{s_y(\bm{x},\bm{\theta})}}{q_{\bm{x}}(y)}
}
\Bigg]
g_{\bm{\psi}}(\bm{x}).
\label{eq:grad2}
\end{align}
The gradients in Eqs. (\ref{eq:grad1}) and (\ref{eq:grad2}) enjoy the following property.
\begin{prop}\label{pro:1}
At each iteration of Algorithm \ref{algo:tree}, if an edge $(o,c)$ is included in every selected path, then $\bm{\theta}_{(o,c)}$ does not need to be updated.
\end{prop}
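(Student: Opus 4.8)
The plan is to show that the bracketed factor multiplying $g_{\bm{\psi}}(\bm{x})$ in Eqs.~(\ref{eq:grad1}) and~(\ref{eq:grad2}) vanishes whenever the edge $(o,c)$ lies on every selected path. First I would treat the easier case $y\notin\mathcal{C}_{\bm{x}}$ using Eq.~(\ref{eq:grad2}): if $(o,c)$ is on every selected path, then in particular $(o,c)\in\mathcal{P}_y$ and $(o,c)\in\mathcal{P}_{k'}$ for all $k'\in\mathcal{C}_{\bm{x}}$, so every indicator $\mathbb{I}((o,c)\in\mathcal{P}_{k'})$ and $\mathbb{I}((o,c)\in\mathcal{P}_y)$ equals $1$. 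The numerator of the subtracted fraction then collapses to $\sum_{k'\in\mathcal{C}_{\bm{x}}}e^{s_{k'}(\bm{x},\bm{\theta})}+e^{s_y(\bm{x},\bm{\theta})}/q_{\bm{x}}(y)$, which is exactly the denominator, so the fraction is $1$ and the whole bracket $\mathbb{I}((o,c)\in\mathcal{P}_y)-1 = 1-1 = 0$.

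Next I would handle the case $y\in\mathcal{C}_{\bm{x}}$ using Eq.~(\ref{eq:grad1}). Here "every selected path" means every path in $\mathcal{P}_{\mathcal{C}_{\bm{x}}\cup T_{\bm{x}}}$, i.e. $(o,c)\in\mathcal{P}_{k'}$ for all $k'\in\mathcal{C}_{\bm{x}}$ (including $k'=y$ since $y\in\mathcal{C}_{\bm{x}}$) and $(o,c)\in\mathcal{P}_j$ for all $j\in T_{\bm{x}}$. Fix any summand $j\in T_{\bm{x}}$: all the indicators inside are $1$, so the fraction again equals $\bigl(\sum_{k'\in\mathcal{C}_{\bm{x}}}e^{s_{k'}}+e^{s_j}/q_{\bm{x}}(j)\bigr)\big/\bigl(\sum_{k'\in\mathcal{C}_{\bm{x}}}e^{s_{k'}}+e^{s_j}/q_{\bm{x}}(j)\bigr)=1$, and $\mathbb{I}((o,c)\in\mathcal{P}_y)=1$, so each summand of the average over $T_{\bm{x}}$ is $1-1=0$; hence the average, and therefore the gradient, is $0$. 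The conclusion is that $\partial\hat{R}/\partial\bm{\theta}_{(o,c)}=\bm{0}$, so the SGD step leaves $\bm{\theta}_{(o,c)}$ unchanged, which is the claim.

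There is essentially no analytic obstacle here — the argument is a one-line cancellation in each of the two cases. The only point that needs a little care is the bookkeeping of what "every selected path" refers to in each branch: in the $y\in\mathcal{C}_{\bm{x}}$ branch the selected paths are indexed by $\mathcal{C}_{\bm{x}}\cup T_{\bm{x}}$, while in the $y\notin\mathcal{C}_{\bm{x}}$ branch they are indexed by $\mathcal{C}_{\bm{x}}\cup\{y\}$; in both, the observed-label path $\mathcal{P}_y$ is among the selected ones, which is what makes the leading indicator equal $1$ and match the collapsed fraction. I would also remark on the intuition: an edge common to all selected paths carries a factor $g_{\bm{\psi}}(\bm{x})$ in the numerator of $s_k$ for every competitor alike, so it cannot help discriminate the target from the noises and correctly receives zero gradient — this mirrors the shared-edge observation already made for $\mathcal{P}_1,\mathcal{P}_2$ in Fig.~\ref{fig:model}. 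Finally I would note the practical payoff: edges above the lowest common ancestor of $\mathcal{C}_{\bm{x}}\cup T_{\bm{x}}$ (resp.\ $\mathcal{C}_{\bm{x}}\cup\{y\}$) can be skipped entirely in the update loop, reinforcing the $\mathcal{O}(N_c+N_n)$ cost claim.
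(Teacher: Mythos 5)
Your proposal is correct and follows exactly the argument the paper gives (the paper simply states that if $(o,c)$ belongs to every selected path then the gradients in Eqs.~(\ref{eq:grad1}) and~(\ref{eq:grad2}) are $\bm{0}$); you merely spell out the cancellation, namely that all indicators become $1$ so the fraction collapses to $1$ and the bracket to $1-1=0$ in both branches. The bookkeeping of which index set defines ``every selected path'' in each branch is handled correctly, so nothing is missing.
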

The proof of Proposition \ref{pro:1} is straightforward that if $(o,c)$ belongs to every selected path, then the gradients in Eqs. (\ref{eq:grad1}) and (\ref{eq:grad2}) are $\bm{0}$. The above property allows a fast detection of those parameters which do not need to be updated in SGD and hence can save computations. In practice, the number of shared edges is related to the tree structure.

Since we use beam search to choose the candidates in a tree structure, the proposed algorithm is referred to as {\em Beam Tree}, which is depicted in Algorithm \ref{algo:tree}.
\footnote{\small The beam search procedure in step 7 is provided in the supplementary material.}
For the tree construction method in step 3, we can use some hierarchical clustering based methods which will be detailed in the experiments and supplementary material. In the algorithm, the beam search needs $\mathcal{O}\left(N_c\log_{b}K\right)$ operations, where $b$ is a constant related to the tree structure, e.g., binary tree for $b=2$. The parameter updating needs $\mathcal{O}((N_c+N_n)\log_{b}K)$ operations. Therefore, Algorithm \ref{algo:tree} has a complexity of $\mathcal{O}((2N_c+N_n)\log_{b}K)$ which is logarithmic with respect to $K$. The term  $\log_{b}K$ is from the tree structure used in this specific candidate selection method, so it does not conflict with the complexity of the general Algorithm \ref{algo:generic}, which is independent of $K$. Another advantage of the Beam Tree algorithm is that it allows fast predictions and can naturally output the top-$J$ predictions using beam search. The prediction time has an order of $\mathcal{O}\left(J\log_{b}K\right)$ for the top-$J$ predictions. 

\section{Application to Neural Language Modeling}\label{sec:nlp}
In this section, we apply the CANE method to neural language modeling which solves a probability density estimation problem. In neural language models, the conditional probability distribution of the target word $w$ given context $h$ is defined as
\begin{align*}
	P_h(w) = \frac{e^{s_w(h, \bm{\theta})}}{\sum_{w'\in\mathcal{V}}e^{s_{w'}(h,\bm{\theta})}},
\end{align*}
where $s_{w}(h,\bm{\theta})$ is the scoring function with parameter $\bm{\theta}$. A word $w$ in the context $h$ will be represented by an embedding vector $\bm{u}_w\in\mathbb{R}^d$ with embedding size $d$. Given context $h$, the model computes the score for the target word $w$ as
\begin{align*}
	s_w(h,\bm{\theta})=g_{\bm{\varphi}}(\bm{u}_h)\bm{v}_w,
\end{align*}
where $\bm{\theta}=\{\bm{u},\bm{v},\bm{\varphi}\}$, $g_{\bm{\varphi}}(\cdot)$ is a representation function (parameterized by $\bm{\varphi}$) of the embeddings in the context $h$, e.g., a LSTM modular \cite{hochreiter1997long}, and $\bm{v}_w$ is the weight parameter for the target word $w$. Both the word embedding $\bm{u}$ and weight parameter $\bm{v}$ need to be estimated. In language models, the vocabulary size $|\mathcal{V}|$ is usually very large and the computation of the normalization factor is expensive. Therefore, instead of estimating the exact probability distribution $P_h(w)$, sampling methods such as NCE and its variants \cite{mnih2013learning,ji2015blackout} are typically adopted to approximate $P_h(w)$.

In order to apply the CANE method, we need to select the candidates given any context $h$. For multi-class classification problem, we have devised a Beam Tree algorithm in Algorithm \ref{algo:tree} that uses a tree structure to select candidates, and the tree can be obtained by some hierarchical clustering methods over $\bm{x}$ before learning. However, different from the classification problem, the word embeddings in the language model are not known before training, and thus obtaining a hierarchical structure based on the word embeddings is not practical.
In this paper, we construct a simple tree with only one layer under the root, where the layer contains $N$ subsets formed by splitting the words according to their frequencies. At each iteration of Algorithm~\ref{algo:tree}, we route the example by selecting the subset with the largest score (in place of beam search) and then sample the candidates from the subset according to some distribution. For the noises in CANE, we directly sample words out of the candidate set according to $\bm{q}$. 
Other methods can be used to select the candidates alternatively, for example, one can choose candidates conditioned on the context $h$ using a lightly pre-trained N-gram model.

\section{Related Algorithms}
We provide a discussion comparing CANE with the existing techniques for solving the large class space problem. Given $(\bm{x},y)$, NCE and its variants \cite{gutmann2010noise,mnih2013learning,mikolov2013distributed,ji2015blackout,Titsias2016one,botev2017complementary} use the observed class $y$ as the only `candidate', while CANE chooses a subset of candidates $\mathcal{C}_{\bm{x}}$ according to $\bm{x}$. NCE assumes the entire noise distribution $P_{noise}(y)$ is known (e.g., a power-raised unigram distribution).
However, in general multi-class classification problems, when the knowledge of the noise distribution is absent, NCE may have unstable estimations using an inaccurate noise distribution. CANE is developed for general multi-class classification problems and does not rely on a known noise distribution. Instead, CANE focuses on a small candidate set $\mathcal{C}_{\bm{x}}$. Once the true class label is contained in $\mathcal{C}_{\bm{x}}$ with high probability, CANE will have low statistical variance. 
The variants of NCE \cite{mikolov2013distributed,ji2015blackout,Titsias2016one,botev2017complementary} also sample one or multiple noises to replace the normalization factor while according theoretical guarantees on the consistency and variance are rarely discussed. NCE and its variants can not speed up prediction while the Beam Tree algorithm can reduce the prediction complexity to $O(\log K)$.

The Beam Tree algorithm is related to some tree classifiers, while CANE is a general procedure and we only use tree structure to select candidates. The Beam Tree method itself is also different from existing tree classifiers. Most of the state-of-the-art tree classifiers, e.g., LOMTree \cite{choromanska2015logarithmic} and Recall Tree \cite{daume2016logarithmic}, store local classifiers in their internal nodes, and route examples through the root until reaching the leaf. Differently, the Beam Tree algorithm shown in Fig. \ref{fig:model} does not maintain local classifiers, and it only uses the tree structure to perform global heuristic search for candidate selection. We will compare our approach to some state-of-the-art tree classifiers in the experiments.

\section{Experiments}
We evaluate the CANE method in various applications in this section, including both multi-class classification problems and neural language modeling. We compare CANE with NCE, its variants and some state-of-the-art tree classifiers that have been used for large class space problems. The competitors include the standard softmax, the NCE \cite{mnih2013learning,mnih2012fast}, the BlackOut \cite{ji2015blackout}, the hierarchical softmax (HSM) \cite{morin2005hierarchical}, the Filter Tree \cite{beygelzimer2009error} implemented in Vowpal-Wabbit (VW, a learning platform)\footnote{\url{https://github.com/JohnLangford/vowpal_wabbit/wiki}}, the LOMTree \cite{choromanska2015logarithmic} in VW and the Recall Tree \cite{daume2016logarithmic} in VW.

\subsection{Classification Problems}

\begin{figure*}[!ht]
\centering
\subfigure[Sector]{
\includegraphics[scale=0.4]{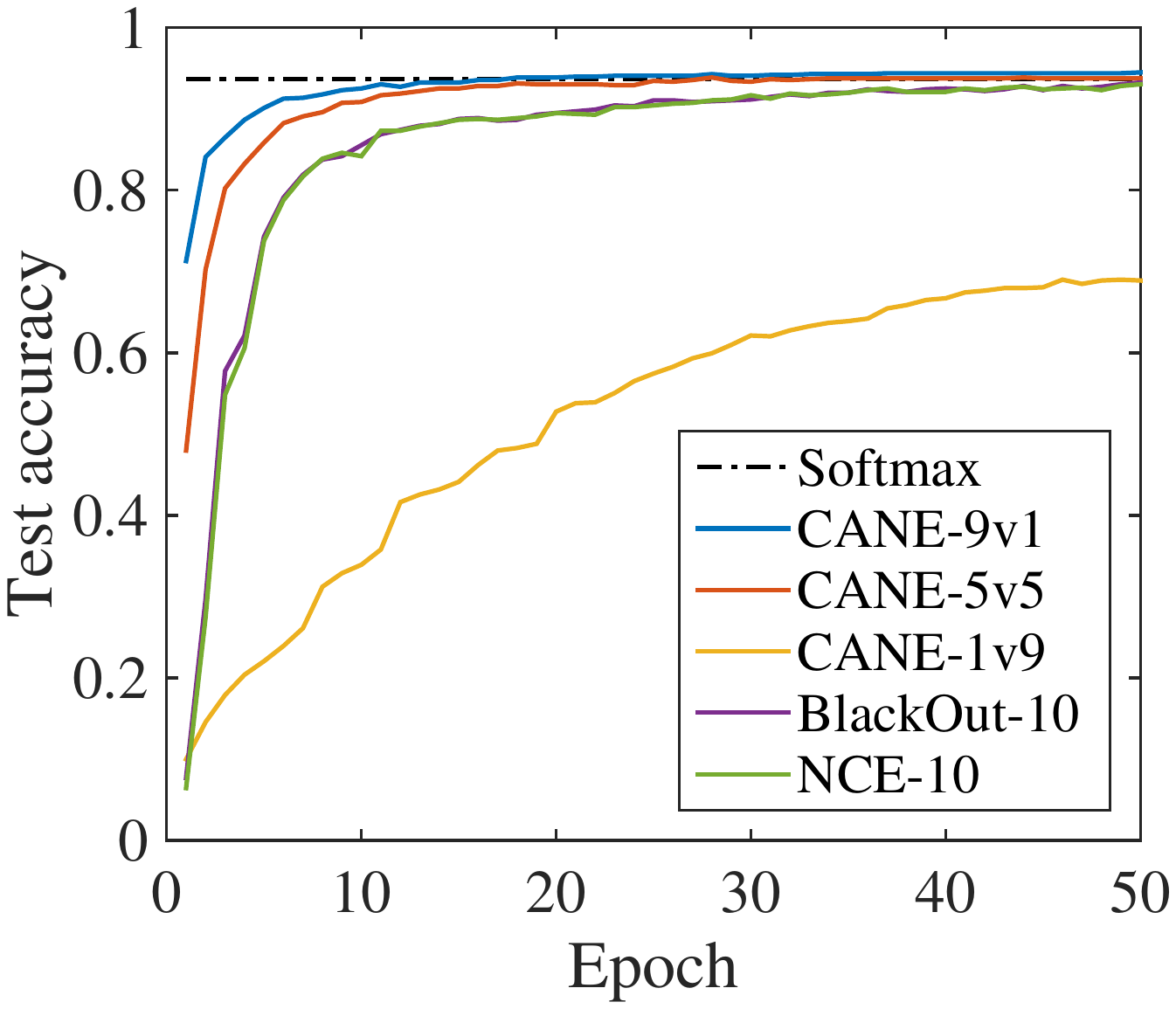}
}
\subfigure[ALOI]{
\includegraphics[scale=0.4]{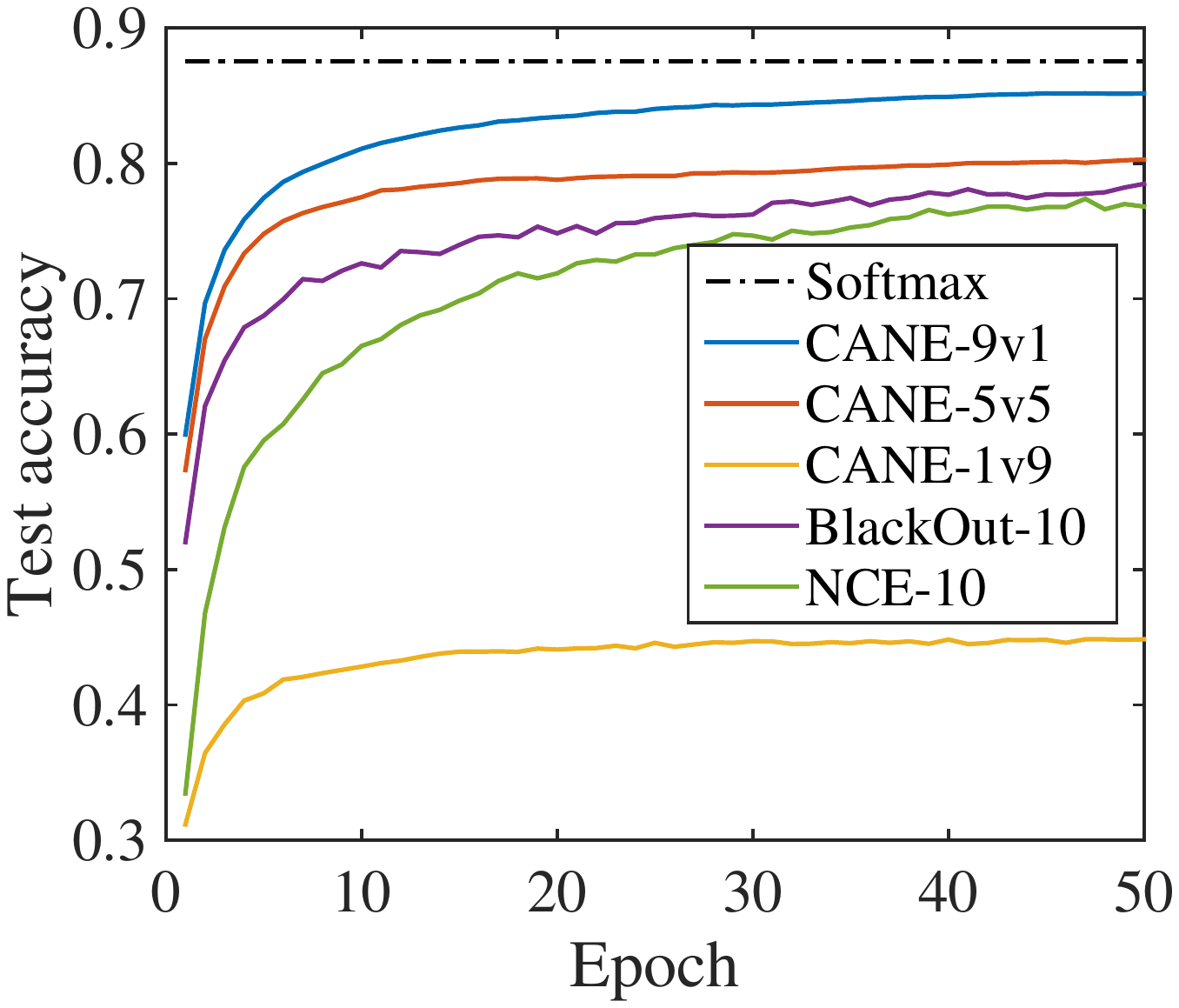}
}

\subfigure[ImgNet-2010]{
\includegraphics[scale=0.4]{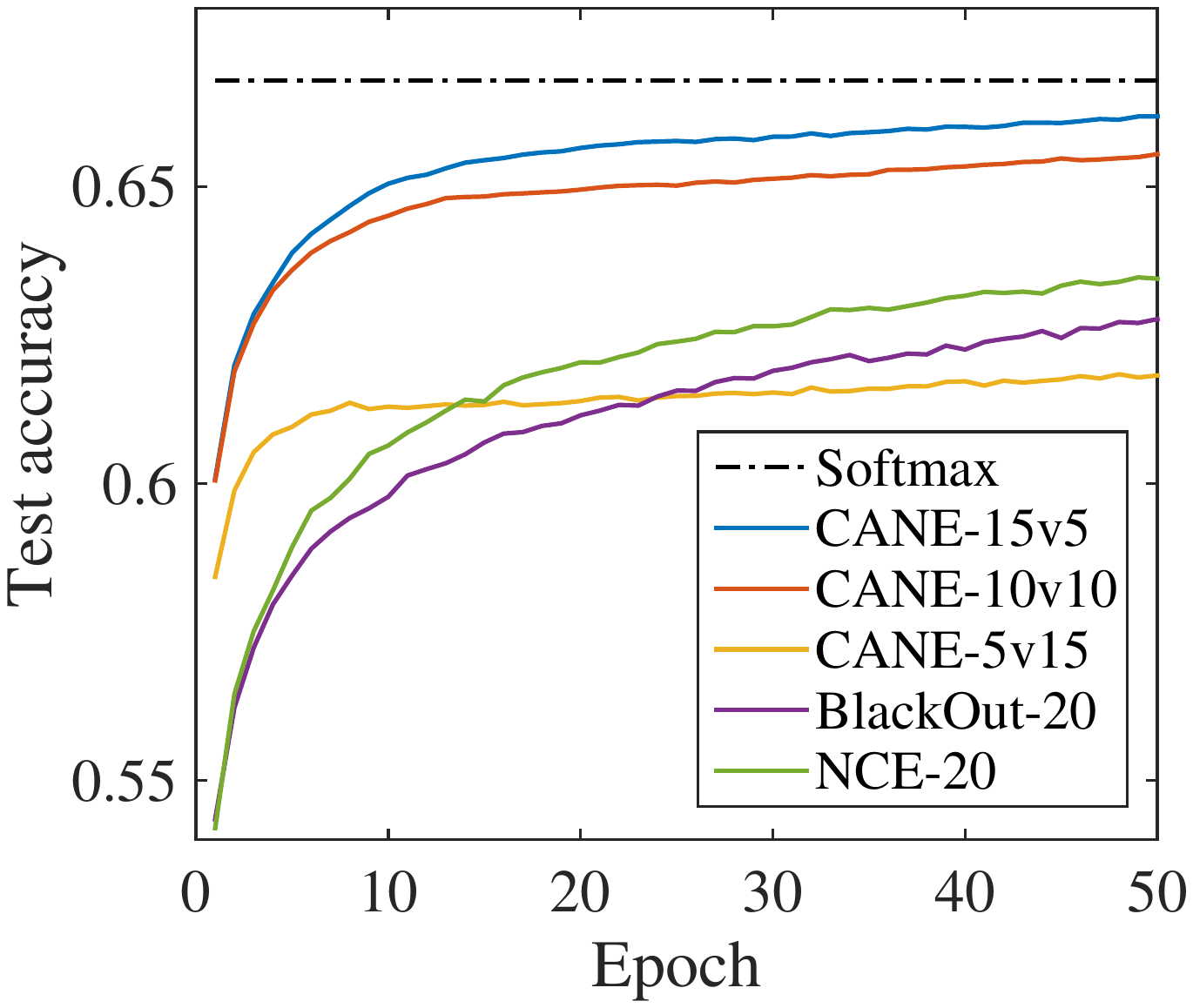}
}
\subfigure[ImgNet-10K]{
\includegraphics[scale=0.39]{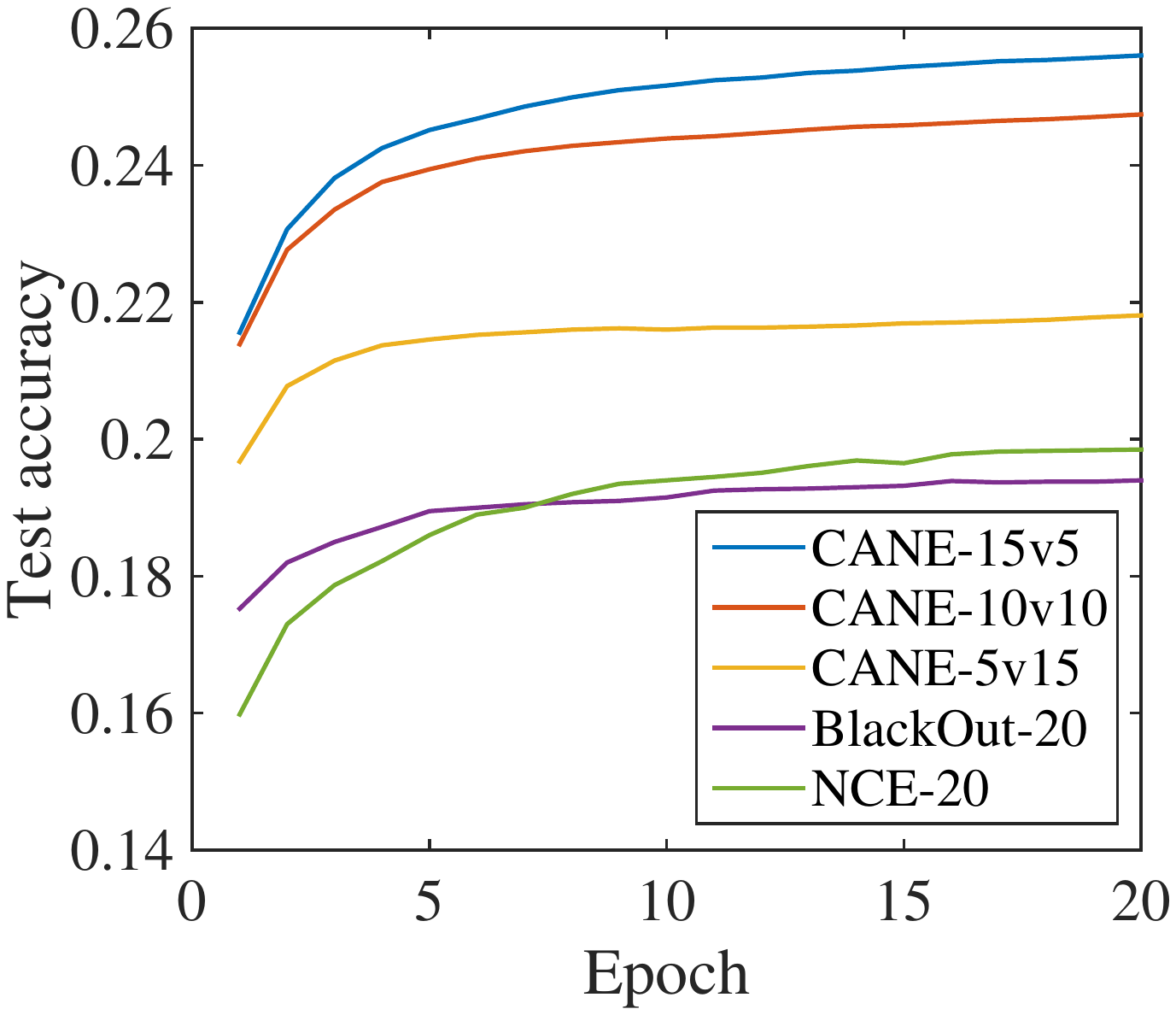}
}

\caption{Results of test accuracy vs. epoch on different classification datasets.}
\vskip -0.2in
\label{fig:class}
\end{figure*}

\begin{table*}[!ht]
\small
\centering
\caption{Training / testing time of the sampling methods. Running Softmax and testing NCE and BlackOut on large datasets are time consuming. We use multi-thread implementation for these methods and estimate the running time. `$\sim$' indicates the estimated time.}
\scalebox{1}{
\begin{tabular}{c|c|c|c|c|c||c}
\hline
\hline
Data & NCE-10 & BlackOut-10 & CANE-1v9 & CANE-5v5 & CANE-9v1 & Softmax \\ 
\hline
Sector & 0.4m / 0.8s & 0.4m / 0.8s & 1m / 0.1s & 1.8m / 0.1s & 2.3m / 0.2s & 6.1m / 0.9s \\
ALOI & 3m / 6s & 3m / 6s & 4m / 0.1s & 7m / 0.3s & 8m / 0.5s & 28m / 7s \\
\hline
\hline
Data & NCE-20 & BlackOut-20 & CANE-5v15 & CANE-10v10 & CANE-15v5 & Softmax \\ 
\hline
ImgNet-2010 & 3.5h / 8m & 3.5h / 8m & 4h / 0.4m & 5.8h / 0.7m & 6.4h / 0.9m & 96h / 8.7m \\
ImgNet-10K & 13h / $\sim$5d & 12h / $\sim$5d & 20h / 1h & 33h / 1.5h & 39h / 2h & $\sim$140d / $\sim$5d \\
\hline
\hline
\end{tabular}
}
\label{tb:1}
\end{table*}

\begin{table}[!ht]
\small
\centering
\caption{Accuracy and training / testing time of the tree classifiers.}
\begin{tabular}{c|c|c|c|c}
\hline
\hline
Data & HSM & Filter Tree & LOMTree & Recall Tree \\ 
\hline
Sector & 91.36\% & 84.67\% & 84.91\% & 86.89\% \\
 & 0.5m / 0.1s & 0.4m / 0.4s & 0.5m / 0.2s &  0.7m / 0.2s \\
\hline
ALOI & 65.69\% & 20.07\% & 82.70\% &  83.03\% \\
 & 1m / 0.4s & 1m / 0.2s & 3.3m / 1s & 2.5m / 0.2s \\
\hline
ImgNet & 47.68\% & 48.29\% & 49.87\% & 61.28\% \\
2010 & 4.7h / 0.5m & 6.8h / 0.1m & 17.8h / 0.3m & 32h / 0.5m \\
\hline
ImgNet & 17.31\% & 4.49\% & 9.72\% & 22.74\% \\
10K & 14h / 1h & 22h / 0.3h & 23h / 0.3h & 68h / 1.2h \\
\hline
\hline
\end{tabular}
\label{tb:2}
\end{table}

In this section, we consider four multi-class classification problems, including the Sector\footnote{\url{http://www.cs.utexas.edu/~xrhuang/PDSparse/}} dataset with 105 classes \cite{chang2011libsvm}, the ALOI\footnote{\url{http://www.csie.ntu.edu.tw/~cjlin/ libsvmtools/datasets/multiclass.html}} dataset with 1000 classes \cite{geusebroek2005amsterdam}, the ImageNet-2010\footnote{\url{http://image-net.org}} dataset with 1000 classes, and the ImageNet-10K\footnotemark[5] dataset with 10K classes (ImageNet Fall 2009 release). The data from Sector and ALOI is split into 90\% training and 10\% testing. In ImageNet-2010, the training set contains 1.3M images and we use the validation set containing 50K images as the test set. The ImageNet-10K data contains 9M images and we randomly split the data into two halves for training and testing by following the protocols in \cite{deng2010does,sanchez2011high,le2013building}.
For ImageNet-2010 and ImageNet-10K datasets, similar to \cite{oquab2014learning}, we transfer the mid-level representations from the pre-trained VGG-16 net \cite{simonyan2014very} on ImageNet 2012 data \cite{russakovsky2015imagenet} to our case. Then, we concatenate CANE or other compared methods above the partial VGG-16 net as the top layer. The parameters of the partial VGG-16 net are pre-trained\footnote{\small \url{http://www.robots.ox.ac.uk/~vgg/research/very_deep/}} and kept fixed. Only the parameters in the top layer are trained on the target datasets, i.e., ImageNet-2010 and ImageNet-10K. 

We use $b$-nary tree for CANE and set $b=10$ for all classification problems. We trade off $|\mathcal{C}_{\bm{x}}|$ and $|T_{\bm{x}}|$ to see how these parameters affect the learning performance. Different configurations will be referred to as `CANE-($|\mathcal{C}_{\bm{x}}|$ vs. $|T_{\bm{x}}|$)'. We always let $|\mathcal{C}_{\bm{x}}|+|T_{\bm{x}}|$ equal the number of noises used by NCE and BlackOut, so that these methods will have the same number of considered classes. We use `NCE-$k$' and `BlackOut-$k$' to denote the corresponding method with $k$ noises. Generally, a large $|\mathcal{C}_{\bm{x}}|+|T_{\bm{x}}|$ and $k$ will lower the variance of CANE, NCE and BlackOut and improve their performance, but this also increases the computation. 
We set $k=10$ for Sector and ALOI and $k=20$ for ImageNet-2010 and ImageNet-10K.
We uniformly sample noises in CANE. For NCE and BlackOut, by following \cite{mnih2012fast,mnih2013learning,ji2015blackout,botev2017complementary}, we use the power-raised unigram distribution with the power factor selected from $\{0, 0.1, 0.3, 0.5, 0.75, 1\}$ to sample the noises. However, when the classes are balanced as in many cases of the classification datasets, this distribution reduces to the uniform distribution. For the compared tree classifiers, the HSM adopts the same tree used by CANE, the Filter Tree generates a fixed tree itself in VW, the LOMTree and Recall Tree use binary trees and they are able to adjust the tree structure automatically.

All the methods use SGD with learning rate selected from $\{0.0001, 0.001, 0.01, 0.05, 0.1, 0.5,$ $ 1.0\}$. The Beam Tree algorithm requires a tree structure and we use some tree generated by a simple hierarchical clustering method on the centers of the individual classes.\footnote{\small The method is provided in the supplementary material.}
We run all the methods 50 epochs on Sector, ALOI and ImageNet-2010 datasets and 20 epochs on ImageNet-10K to report the accuracy vs. epoch curves. All the methods are implemented using a standard CPU machine with quad-core Intel Core i5 processor.

Fig. \ref{fig:class} and Table \ref{tb:1} show the accuracy vs. epoch plots and the training / testing time for NCE, BlackOut, CANE and Softmax. The tree classifiers in the VW platform require the number of training epochs as input and do not take evaluation directly after each epoch, so we report the final results of the tree classifiers in Table \ref{tb:2}. For ImageNet-10K data, the Softmax method is very time consuming (even with multi-thread implementation) and we do not report this result. As we can observe, by fixing $|\mathcal{C}_{\bm{x}}|+|T_{\bm{x}}|$, using more candidates than noises in CANE will achieve better performance, because a larger $\mathcal{C}_{\bm{x}}$ will increase the chance to cover the target class $y$. The probability that the target class is included in the selected candidate set on the test data is reported in Table \ref{tb:topcandi}. 
On all the datasets, CANE with larger candidate set achieves considerable improvement compared to other methods in terms of accuracy.
The speed of processing each example of CANE is slightly slower than that of NCE and BlackOut because of beam search, however, CANE shows faster convergence to reach higher accuracy. 
Moreover,  the prediction time of CANE is much faster than those of NCE and BlackOut. It is worth mentioning that CANE exceeds some state-of-the-art results on the ImageNet-10K data, e.g., 19.2\% top-1 accuracy reported in \cite{le2013building} and 21.9\% top-1 accuracy reported in \cite{mensink2013distance} which are conducted from $\mathcal{O}(K)$ methods; but it underperforms the recent result 28.4\% in \cite{huang2016local}. This is probably because the VGG-16 net works better than the neural network structure used in \cite{le2013building} and the distance-based method in \cite{mensink2013distance}, while the method in \cite{huang2016local} adopts a better feature embedding, which leads to superior prediction performance on this dataset.

\begin{table}[t]
\small
\centering
\caption{The probability that the true label is included in the selected candidate set on the test set, i.e., the top-$|\mathcal{C}_{\bm{x}}|$ accuracy.}
\begin{center}
\begin{tabular}{c|cc|c|ccccccc}
\hline
\hline
$|\mathcal{C}_{\bm{x}}|$ & Sector & ALOI & $|\mathcal{C}_{\bm{x}}|$ & ImgNet-2010 & ImgNet-10K \\ 
\hline
1 & 68.89\% & 44.84\% & 5 & 76.59\% & 39.59\% \\
5 & 96.57\% & 86.47\% & 10 & 87.29\% & 53.28\% \\
9 & 97.92\% & 93.59\% & 15 & 91.17\% & 60.22\% \\
\hline
\hline
\end{tabular}
\end{center}
\label{tb:topcandi}
\end{table}

\subsection{Neural Language Modeling}
In this experiment, we apply the CANE method to neural language modeling. We test the methods on two benchmark corpora: the Penn TreeBank (PTB) \cite{mikolov2010recurrent} and Gutenberg\footnote{\small \url{www.gutenberg.org}} corpora. The Penn TreeBank dataset contains 1M tokens and we choose the most frequent 12K words appearing at least 5 times as the vocabulary. The Gutenberg dataset contains 50M tokens and the most frequent 116K words appearing at least 10 times are chosen as the vocabulary.
We set the embedding size as 256 and use a LSTM model with 512 hidden states and 256 projection size. The sequence length is fixed as 20 and the learning rate is selected from $\{0.025, 0.05, 0.1, 0.2\}$.

The tree classifiers evaluated in multi-class classification problems can not be directly applied to solve the language modeling problem, so we omit their comparison and focus on the evaluation of the sampling methods. We sample 40, 60 and 80 noises for NCE and Blackout respectively and use power-raised unigram distribution with the power factor selected from $\{0, 0.25, 0.5, 0.75, 1\}$. For CANE, we adopt the one-layer tree structure discussed in Section~\ref{sec:nlp} with $N=6$ subsets, split by averaging over the word frequencies. We uniformly sample the candidates when reaching any subset. For efficiency consideration, we respectively sample 40, 60 and 80 candidates plus one more uniform noise for CANE. The experiments in this section are implemented on a machine with NVIDIA Tesla M40 GPUs.

\begin{figure}[t]
\centering
\subfigure[PTB (80)]{
\includegraphics[scale=0.35]{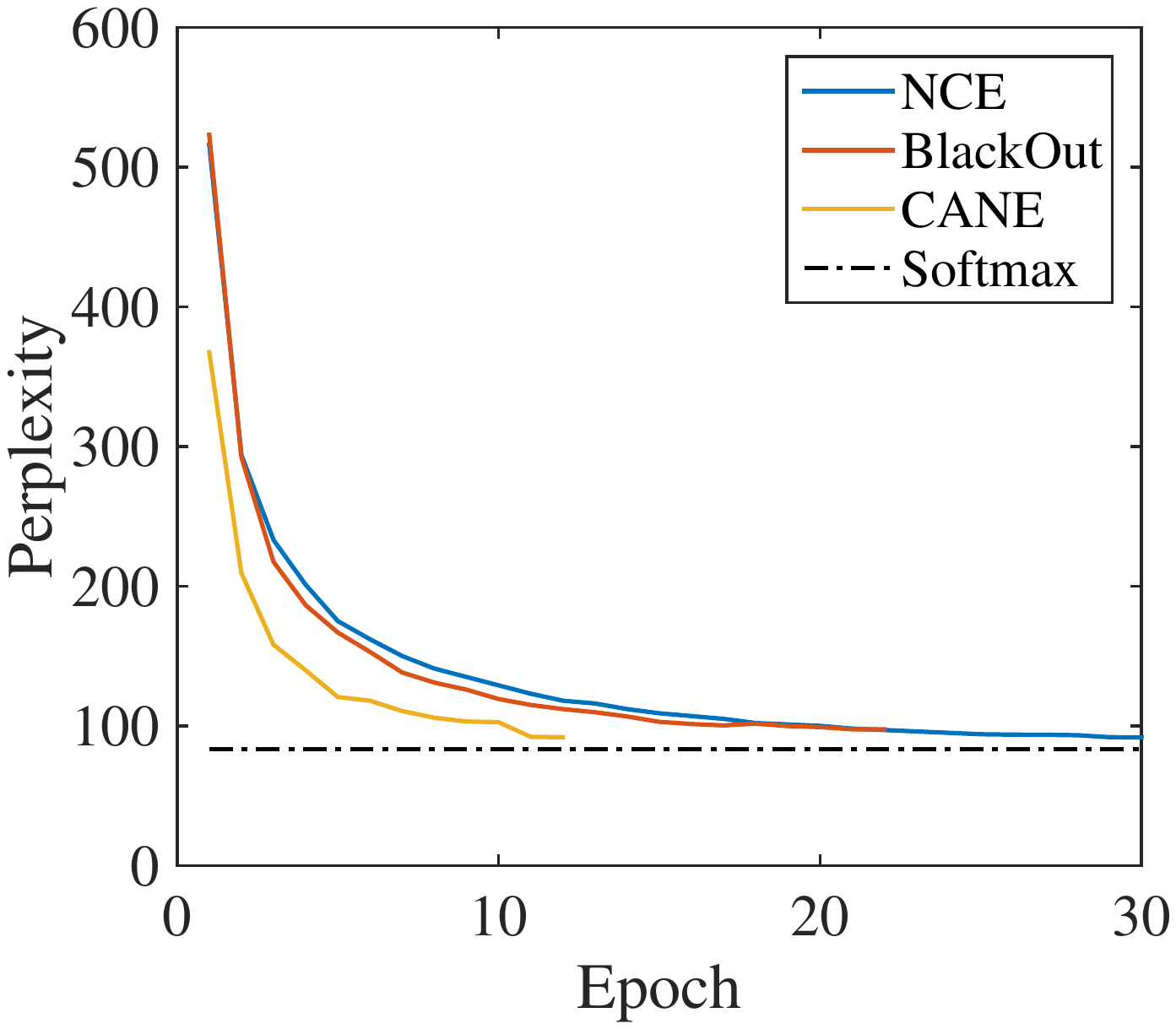}
}
\subfigure[PTB (60)]{
\includegraphics[scale=0.35]{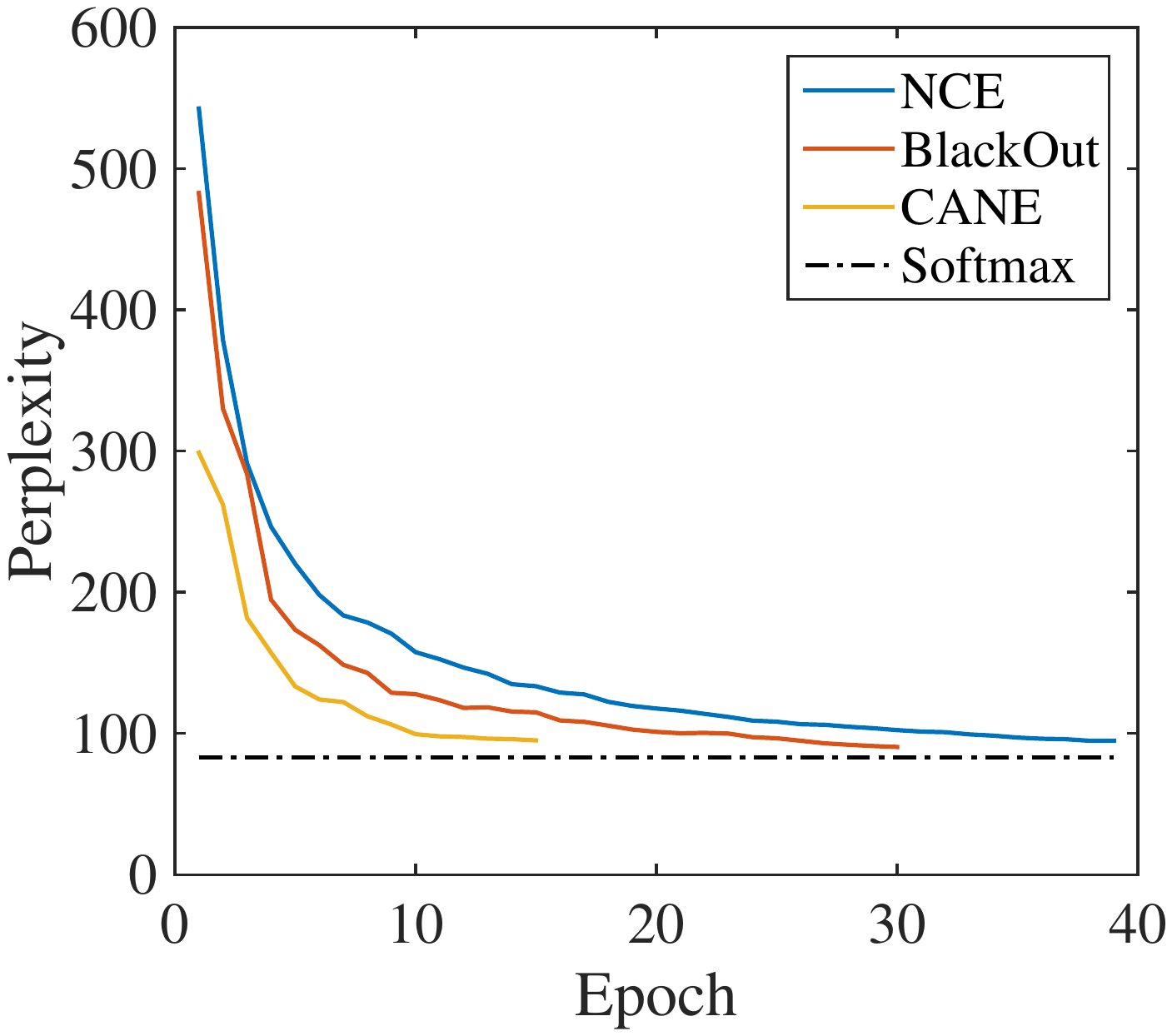}
}
\subfigure[PTB (40)]{
\includegraphics[scale=0.355]{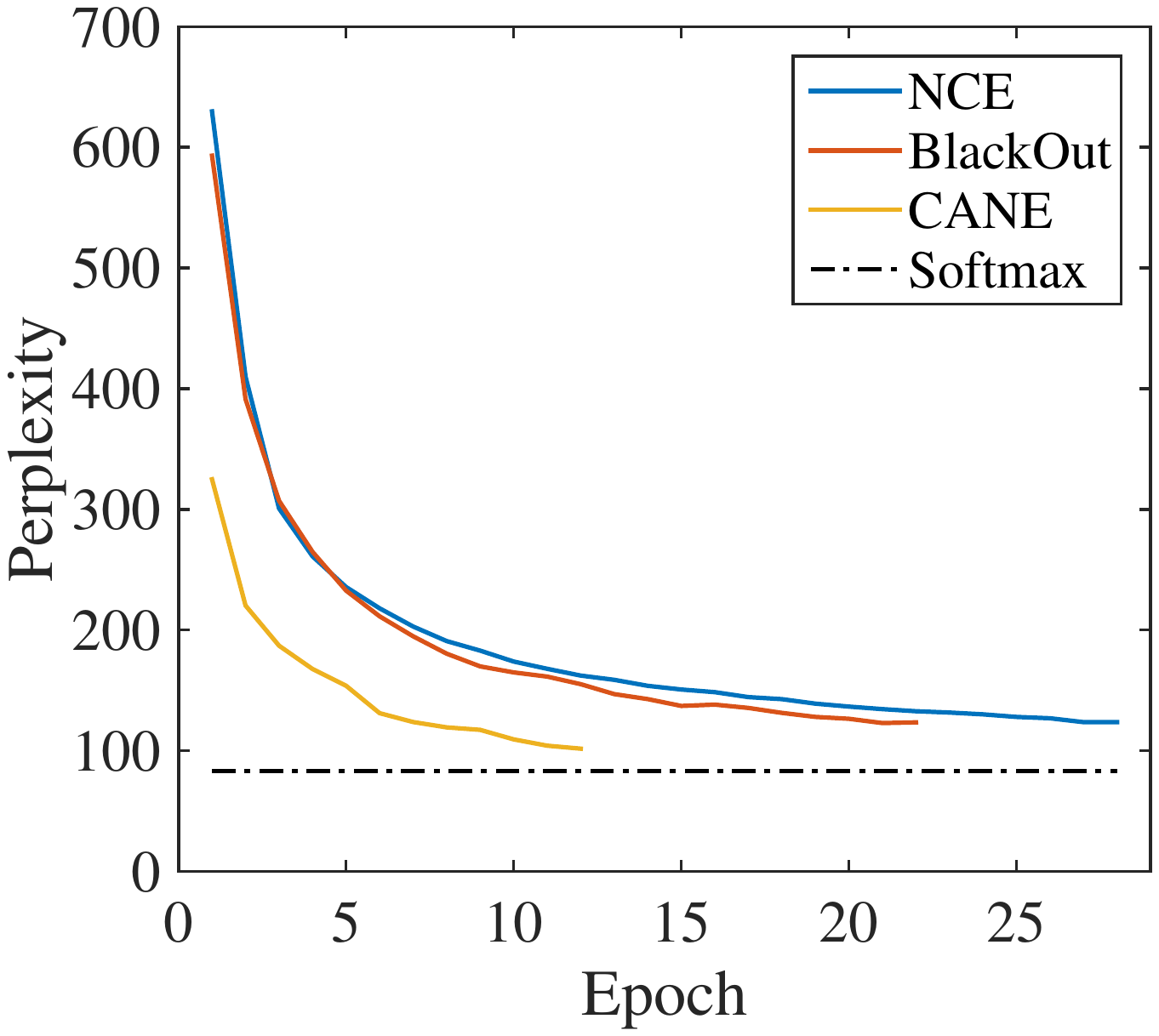}
}

\subfigure[Gutenberg (80)]{
\includegraphics[scale=0.35]{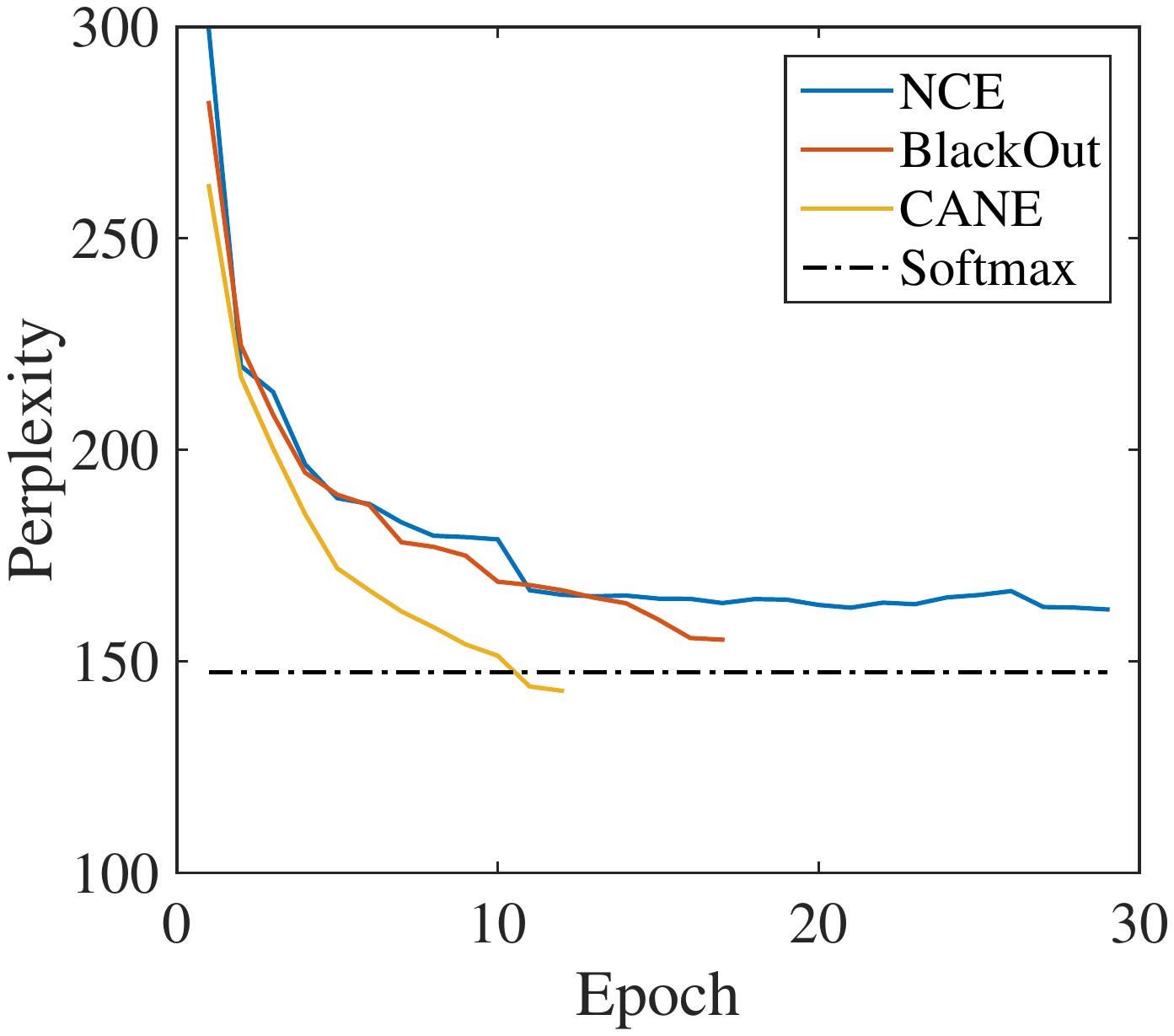}
}
\subfigure[Gutenberg (60)]{
\includegraphics[scale=0.355]{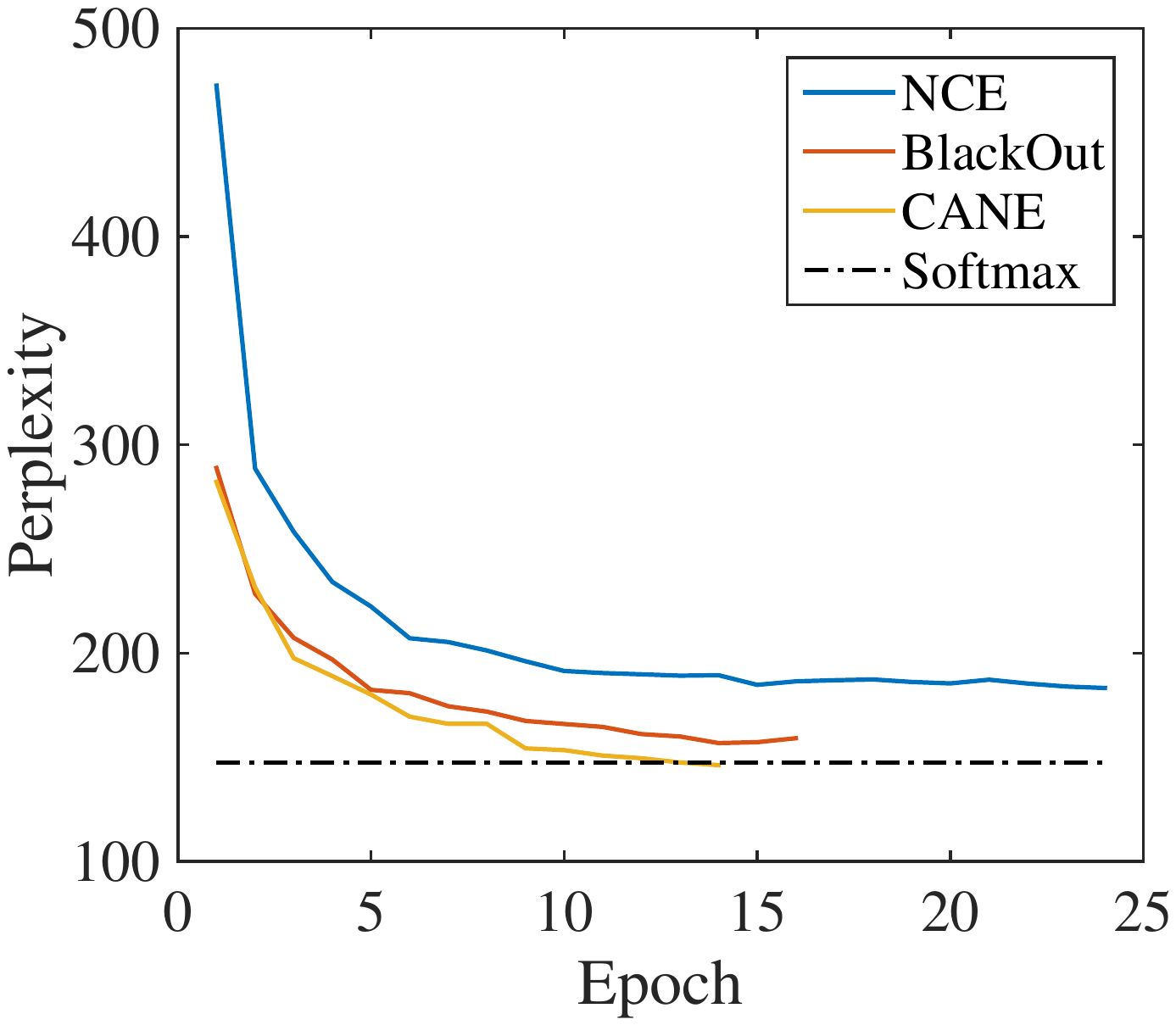}
}
\subfigure[Gutenberg (40)]{
\includegraphics[scale=0.38]{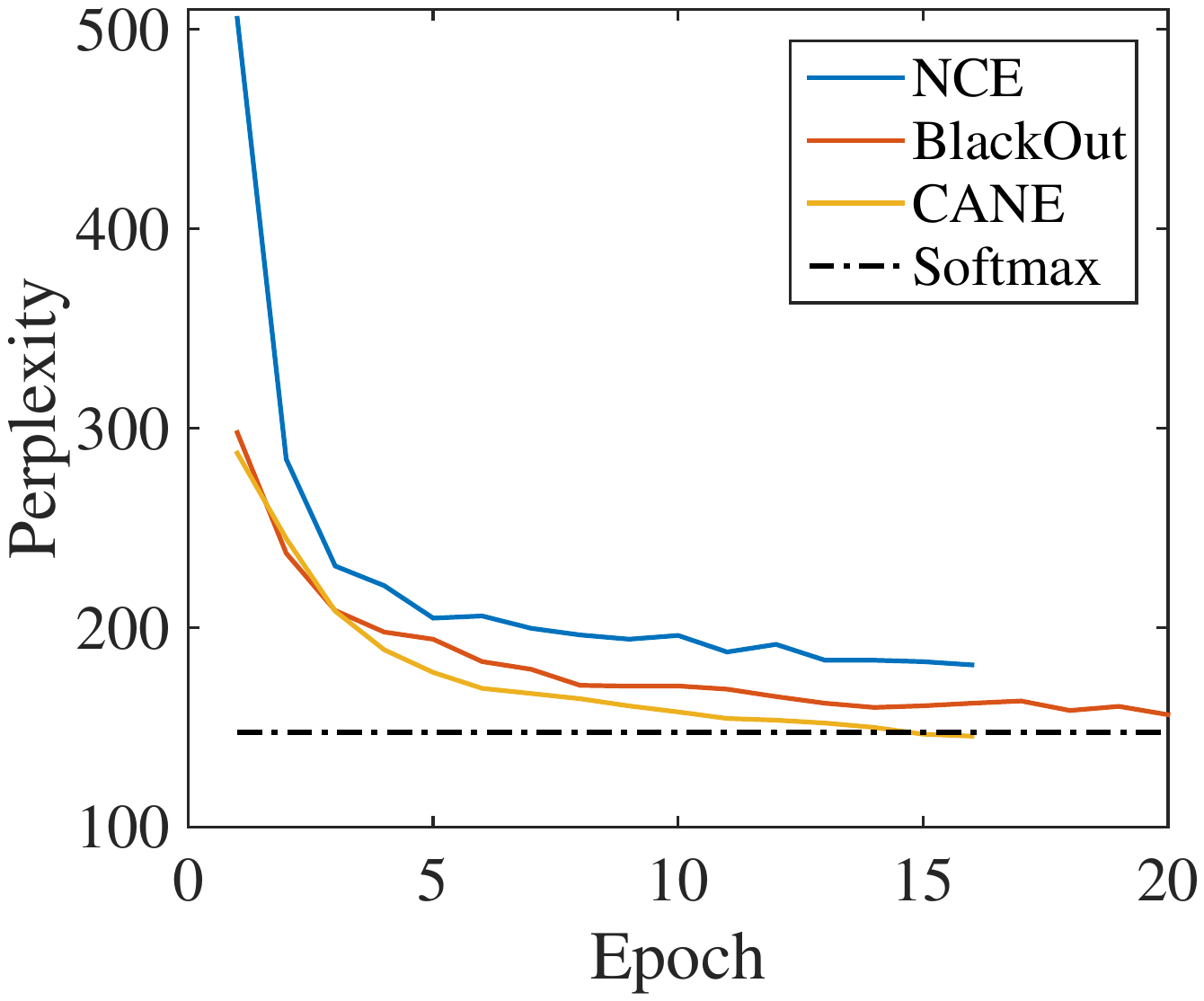}
}

\caption{Test perplexity vs. training epoch on PTB and Gutenberg datasets. Numbers in the brackets indicate the number of selected candidates / noises.}
\vskip -0.1in
\label{fig:nlp}
\end{figure}

The test perplexities are shown in Fig. \ref{fig:nlp}. As we can observe, the CANE method always achieves faster convergence and lower perplexities (approaching that of Softmax) compared to NCE and Blackout under various settings. Generally, when the number of selected candidates / noises decrease, the test perplexities of all the methods increase on both datasets, while the performance degradation of CANE is not obvious. 
By using GPUs, all the methods can finish training within a few minutes on the PTB dataset; for the Gutenberg corpus, CANE and BlackOut have similar training time that is around 5 hours on all the three settings, while NCE spends around 6-8 hours on these tasks and Softmax uses 35 hours to finish the training.

\section{Conclusion}
We proposed Candidates vs. Noises Estimation (CANE) for fast learning in multi-class classification problems with many labels and applied this method to the word probability estimation problem in neural language models. We showed that CANE is consistent and the computation using SGD is always efficient (that is, independent of the class size $K$).  
Moreover, the new estimator has low statistical variance approaching that of the softmax logistic regression, if the observed class label belongs to the candidate set with high probability. Empirical results demonstrated that CANE is effective for speeding up both training and prediction in multi-class classification problems and CANE is effective in neural language modeling.
We note that this work employs a fixed distribution (i.e., the uniform distribution) to sample noises in CANE. However it can be very useful in practice to estimate the noise distribution, i.e., $\bm{q}$, during training, and select noise classes according to this distribution. 

\clearpage

\bibliography{cane}
\bibliographystyle{unsrt}

\clearpage
\appendix
\section*{Supplementary Material}

\section{Proofs}
In the theorectical analysis, we fix $s_K(\bm{x},\bm{\theta})=0$. Then, we only need to consider $\mathcal{C}_{\bm{x}}\cup\mathcal{N}_{\bm{x}}=\{1,\cdots,K-1\}$. Now, the normalization factor becomes
{\small
\[
E(\bm{x},j)=1+\sum_{k'\in \mathcal{C}_{\bm{x}}}e^{s_{k'}(\bm{x},\bm{\theta})}+e^{s_j(\bm{x},\bm{\theta})}/q_{\bm{x}}(j),
\]
}\noindent
with some sampled class $j\in\mathcal{N}_{\bm{x}}$. Now, we can rewrite $R$ and $\hat{R}$ as
{\myfont
\begin{align*}
&R(\bm{\theta})=
\mathbb{E}_{\bm{x}} 
\sum_{k\in\mathcal{C}_{\bm{x}}}p(y=k|\bm{x})
\sum_{j\in \mathcal{N}_{\bm{x}}}q_{\bm{x}}(j)
\log\frac{e^{s_k(\bm{x},\bm{\theta})}}{E(\bm{x},j)}
+\sum_{k\in\mathcal{N}_{\bm{x}}}p(y=k|\bm{x})
\log\frac{e^{s_k(\bm{x},\bm{\theta})}}
{E(\bm{x},k)}
+p(y=K|\bm{x})\sum_{j\in \mathcal{N}_{\bm{x}}}q_{\bm{x}}(j)
\log\frac{1}{E(\bm{x},j)}.
\\
&\hat{R}_n(\bm{\theta})=
\frac{1}{n}\sum_{i=1}^n
\Bigg[
\sum_{k\in \mathcal{C}_{\bm{x}_i}}\mathbb{I}(y_i=k)
\sum_{j\in\mathcal{C}_{\bm{x}_i}}q_{\bm{x}_i}(j)
\log
\frac{e^{s_k(\bm{x}_i,\bm{\theta})}}
{E(\bm{x}_i,j)}
+\sum_{k\in \mathcal{N}_{\bm{x}_i}}\mathbb{I}(y_i=k)
\log
\frac
{e^{s_k(\bm{x}_i,\bm{\theta})}}
{E(\bm{x}_i,k)}
+\mathbb{I}(y_i=K)
\sum_{j\in\mathcal{C}_{\bm{x}_i}}q_{\bm{x}_i}(j)
\log\frac{1}
{E(\bm{x}_i,j)}
\Bigg].
\end{align*}
}\noindent

In the proofs, we will use point-wise notations $p_k$, $s_k$, $q_k$ and $E_k$ to represent $p(y=k|\bm{x})$, $s_k(\bm{x},\bm{\theta})$, $q_{\bm{x}}(k)$ and $E(\bm{x},k)$ for simplicity.

\subsection{Useful Lemma}

We will need the following lemma in our analysis.

\begin{lemma}\label{lem:1}
	For any norm $\|\cdot\|$ defined on the parameter space of $\bm{\theta}$, assume the quantities $\|\nabla_{\bm{\theta}}s_k\|$, $\|\nabla_{\bm{\theta}}^2s_k\|$ and $\|\nabla_{\bm{\theta}}^3s_k\|$ for $k=1,\cdots,K-1$ are bounded. Then, for any compact set $\mathbb{S}$ defined on the parameter space, we have
	{\small
	\begin{align*}
	\sup_{\bm{\theta}\in\mathbb{S}}|\hat{R}_n(\bm{\theta})-R(\bm{\theta})|\xrightarrow{p}0, \quad
	\sup_{\bm{\theta}\in\mathbb{S}}\|\nabla \hat{R}_n(\bm{\theta})-\nabla R(\bm{\theta})\|\xrightarrow{p}0, \quad
	\text{and}\quad
	\sup_{\bm{\theta}\in\mathbb{S}}\|\nabla^2 \hat{R}_n(\bm{\theta})-\nabla^2 R(\bm{\theta})\|\xrightarrow{p}0.
	\end{align*}
	}
\end{lemma}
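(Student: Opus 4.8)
The plan is to establish all three uniform convergence statements by viewing $\hat{R}_n - R$ (and its gradient and Hessian) as empirical processes indexed by $\bm{\theta}$ over the compact set $\mathbb{S}$, and to invoke a uniform law of large numbers (ULLN). The standard route is to verify the two sufficient conditions for a ULLN (e.g. the one in Newey--McFadden or van der Vaart): (i) pointwise convergence in probability at each fixed $\bm{\theta}\in\mathbb{S}$, which is immediate from the ordinary (weak) law of large numbers since each summand is an i.i.d. function of $(\bm{x}_i,y_i)$ with finite mean; and (ii) stochastic equicontinuity, which I would obtain via the stronger and easier-to-check condition that the summand functions are Lipschitz in $\bm{\theta}$ uniformly over the data, with an integrable Lipschitz constant. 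Because $\mathbb{S}$ is compact and the per-sample functions will be shown to be (uniformly) Lipschitz, condition (ii) plus (i) yields $\sup_{\bm{\theta}\in\mathbb{S}}|\hat{R}_n(\bm{\theta})-R(\bm{\theta})|\xrightarrow{p}0$, and the same argument applied to the gradient and to the Hessian gives the other two claims.

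The technical heart is a sequence of boundedness estimates showing that the per-sample summand of $\hat{R}_n$, together with its first three $\bm{\theta}$-derivatives, is bounded by a constant depending only on the assumed bounds on $\|\nabla_{\bm\theta}s_k\|$, $\|\nabla_{\bm\theta}^2 s_k\|$, $\|\nabla_{\bm\theta}^3 s_k\|$ and on $\sup_{\bm\theta\in\mathbb{S}}|s_k(\bm x,\bm\theta)|$ (the latter is finite because $\mathbb{S}$ is compact and $s_k$ is $C^1$, hence bounded on $\mathbb{S}$ for each $\bm x$; more carefully one wants this bound to be integrable in $\bm x$, which follows from the gradient bound and compactness of $\mathbb{S}$ by the mean value theorem relative to a fixed reference point). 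The key structural observation, already noted in the paper around Eq.~\eqref{eq:practice1}--\eqref{eq:grad2}, is that the log-probability terms have the form $s_{y}-\log E(\bm x,j)$ where $E(\bm x,j)=1+\sum_{k'\in\mathcal{C}_{\bm x}}e^{s_{k'}}+e^{s_j}/q_{\bm x}(j)\ge 1$, so $\log E$ and its derivatives are smooth functions of the finitely many scores $\{s_{k'}\}_{k'\in\mathcal{C}_{\bm x}\cup\{j\}}$; each derivative of $\log E$ is a ratio whose denominator is a power of $E\ge 1$ and whose numerator is a polynomial in the $e^{s_{k'}}$ and the $q_{\bm x}(k')^{-1}$ times derivatives of the $s_{k'}$. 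Since $|\mathcal{C}_{\bm x}|$ and $|\mathcal{N}_{\bm x}|$ are fixed and $q_{\bm x}(\cdot)$ is bounded away from $0$ on its (finite) support, and $e^{s_{k'}}$ is bounded on $\mathbb{S}$, these are all bounded; the weights $p(y=k|\bm x)$ and $q_{\bm x}(j)$ in the definition of $\hat{R}_n$ are in $[0,1]$ and only help. I would write out the first derivative bound explicitly (it is essentially Eq.~\eqref{eq:practice1}--\eqref{eq:practice2} with a triangle inequality), then observe the second and third derivatives are handled identically by the same quotient-rule bookkeeping, and conclude that the summand and its first two derivatives are Lipschitz in $\bm\theta$ on $\mathbb{S}$ with a data-independent (or at least integrable) constant --- the Lipschitz constant for the $m$-th derivative coming from the bound on the $(m+1)$-st.

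Given these bounds, I would finish as follows: for the function-value statement, apply the ULLN directly. For the gradient statement, note $\nabla\hat{R}_n = \frac1n\sum_i \nabla(\text{summand}_i)$ is again an empirical average of i.i.d. terms that are bounded (by the first-derivative estimate) and Lipschitz in $\bm\theta$ (by the second-derivative estimate), so the ULLN applies componentwise (or in the fixed norm $\|\cdot\|$, all norms on the finite-dimensional parameter space being equivalent), giving $\sup_{\bm\theta\in\mathbb{S}}\|\nabla\hat R_n-\nabla R\|\xrightarrow{p}0$; here one also uses dominated convergence to justify $\nabla R=\mathbb{E}_{\bm x}\nabla(\cdots)$, i.e. that differentiation passes through the expectation, which is licensed by the same domination. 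The Hessian statement is identical, now using the second-derivative bound for integrability/domination and the third-derivative bound for the Lipschitz property. I expect the main obstacle to be purely bookkeeping: carefully controlling the derivatives of $\log E(\bm x,j)$ and verifying that every constant that appears (in particular $1/\min_{k'} q_{\bm x}(k')$ and $\sup_{\bm\theta\in\mathbb{S}} e^{s_{k'}(\bm x,\bm\theta)}$) is either uniformly bounded or integrable in $\bm x$ under the stated hypotheses --- there is no conceptual difficulty once the quotient structure is made explicit, but the third-order estimate is somewhat tedious to write cleanly.
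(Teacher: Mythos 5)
Your proposal is correct and follows essentially the same route as the paper: pointwise convergence by the ordinary law of large numbers, a uniform Lipschitz bound on the per-sample summand obtained from the assumed bounds on $\|\nabla_{\bm{\theta}}s_k\|$ (and on the higher derivatives for the gradient and Hessian statements), and compactness of $\mathbb{S}$ to upgrade to uniform convergence --- the paper carries out the finite-cover argument explicitly where you invoke a standard ULLN, but these are the same argument. Your extra bookkeeping (domination to interchange $\nabla$ and $\mathbb{E}$, control of derivatives of $\log E$) only fills in details the paper leaves implicit.
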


\begin{proof}
For fixed $\bm{\theta}$, let
{\small
\begin{align*}
\psi(\bm{x},y,\bm{\theta})=&
	\sum_{k\in\mathcal{C}_{\bm{x}}}\mathbb{I}(y=k)
	\sum_{j\in\mathcal{N}_{\bm{x}}}
	q_j
	\log
	\frac{e^{s_k}}
	{1+
	\sum_{k'\in C_{\bm{x}_i}}e^{s_{k'}}+\frac{e^{s_j}}{q_j}
	}
	+\mathbb{I}(y=K)\sum_{j\in\mathcal{N}_{\bm{x}}}q_j \log\frac{1}{1+\sum_{k'\in C_{\bm{x}}}e^{s_{k'}}+\frac{e^{s_j}}{q_j}}
	\\
	&+\sum_{k\in\mathcal{N}_{\bm{x}}}\mathbb{I}(y=k)
	\log
	\frac
	{e^{s_k}}
	{1+
	\sum_{k'\in C_{\bm{x}}}e^{s_{k'}}+\frac{e^{s_k}}{q_k}
	}.
\end{align*}
}\noindent
Then we have $\hat{R}_n(\bm{\theta})=\frac{1}{n}\sum_{i=1}^n \psi(\bm{x}_i,\bm{y}_i,\bm{\theta})$ and
$R(\bm{\theta})=\mathbb{E}_{\bm{x},y}\ \psi(\bm{x},\bm{y},\bm{\theta})$. By the Law of Large Numbers, we know that $\hat{R}_n(\bm{\theta})$ converges point-wisely to $R(\bm{\theta})$ in probability.

According to the assumption, there exists a constant $M>0$ such that
{\small
\begin{align*}
\|\nabla_{\bm{\theta}} \psi(\bm{x},y,\bm{\theta})\| 
\leq\sum_{k=1}^{K-1}\left\| \nabla_{\bm{\theta}}s_k \right\| \leq M.
\end{align*}
}\noindent
Given any $\epsilon>0$, we may find a finite cover $\mathbb{S}_\epsilon\subset \mathbb{S}$ so that for any $\bm{\theta} \in \mathbb{S}$, there exists $\bm{\theta}'\in \mathbb{S}_\epsilon$ such that $|\psi(\bm{x},\bm{y},\bm{\theta}) - \psi(\bm{x},\bm{y},\bm{\theta}')| \leq M\|\bm{\theta}-\bm{\theta}'\|< \epsilon$. Since $\mathbb{S}_\epsilon$ is finite, as $n \to \infty$, $\sup_{\bm{\theta} \in \mathbb{S}_\epsilon} | \hat{R}_n(\bm{\theta})-R(\bm{\theta})|$ converges to $0$ in probability. Therefore, as $n \to \infty$, with probability $1$, we have
{\small
\[
\sup_{\bm{\theta} \in \mathbb{S}} | \hat{R}_n(\bm{\theta})-R(\bm{\theta})|< 2 \epsilon + \sup_{\bm{\theta} \in \mathbb{S}_\epsilon} |\hat{R}_n(\bm{\theta})-R(\bm{\theta})| \to 2 \epsilon.
\]
}\noindent
Let $\epsilon \to 0$, we obtain the first bound. The second and the third bounds can be similarly obtained.
\end{proof}

\subsection{Proof of Theorem \ref{theo:NP}}
\begin{proof}
$R$ can be re-written as 
{\myfont
\begin{align*}
R=\mathbb{E}_{\bm{x}}
&\sum_{j\in\mathcal{N}_{\bm{x}}}q_j
\Bigg(
\sum_{k\in\mathcal{C}_{\bm{x}}}p_k
	\log\frac{e^{s_k}}
	{1+
	\sum_{k'\in\mathcal{C}_{\bm{x}}}e^{s_{k'}}+e^{s_j}/q_j
	}
	+p_K
	\log\frac{1}
	{1+
	\sum_{k'\in\mathcal{C}_{\bm{x}}}e^{s_{k'}}+e^{s_j}/q_j
	}
	+\frac{p_j}{q_j}
	\log
	\frac{e^{s_j}}{1+\sum_{k'\in\mathcal{C}_{\bm{x}}}e^{s_{k'}}+e^{s_j}/q_j}
	\Bigg).
\end{align*}
}

For $i\in\mathcal{C}_{\bm{x}}$, we have
{\small
\begin{align*}
\nabla_{s_i} R=
&\ \mathbb{E}_{\bm{x}}
\sum_{j\in\mathcal{N}_{\bm{x}}}q_j
\Bigg[
p_i\left(1-\frac{e^{s_i}}{1+\sum_{k'\in\mathcal{C}_{x}}e^{s_{k'}}+e^{s_j}/q_j}\right)
-\sum_{k\neq i\in\mathcal{C}_{\bm{x}}}
p_k
\frac{e^{s_i}}{1+\sum_{k'\in\mathcal{C}_{x}}e^{s_{k'}}+e^{s_j}/q_j}
\\
&-p_K\frac{e^{s_i}}{1+\sum_{k'\in\mathcal{C}_{x}}e^{s_{k'}}+e^{s_j}/q_j}
-p_j/q_j\frac{e^{s_i}}{1+\sum_{k'\in\mathcal{C}_{x}}e^{s_{k'}}+e^{s_j}/q_j}
\Bigg]
\\
=&\ \mathbb{E}_{\bm{x}}
\sum_{j\in\mathcal{N}_{\bm{x}}}q_j
\Bigg[
p_i-
\left(p_K+\sum_{k\in\mathcal{C}_{x}}p_k+p_j/q_j\right)
\frac{e^{s_i}}{1+\sum_{k'\in\mathcal{C}_{x}}e^{s_{k'}}+e^{s_j}/q_j}
\Bigg].
\end{align*}
}\noindent
Similarly, for $j\in\mathcal{N}_{\bm{x}}$, we have
{\small
\begin{align*}
\nabla_{s_j} R
&=\mathbb{E}_{\bm{x}}
\ q_j\Bigg[
-\left(p_K+\sum_{k\in\mathcal{C}_{\bm{x}}}p_k\right)
\frac{e^{s_j}/q_j}{1+\sum_{k'\in\mathcal{C}_{x}}e^{s_{k'}}+e^{s_j}/q_j}
+p_j/q_j\left(1-\frac{e^{s_j}/q_j}{1+\sum_{k'\in\mathcal{C}_{x}}e^{s_{k'}}+e^{s_j}/q_j}\right)
\Bigg]
\\
&=\mathbb{E}_{\bm{x}}\ 
p_j-\left(p_K+\sum_{k\in\mathcal{C}_{\bm{x}}}p_k+p_j/q_j\right)
\frac{e^{s_j}}{1+\sum_{k'\in\mathcal{C}_{x}}e^{s_{k'}}+{e^{s_j}}/{q_j}}.
\end{align*}
}\noindent
By measuring $s_k=\log\frac{p_k}{p_K}$, we see that $\nabla_{s_k}R=0$ for $k=1,\cdots,K-1$. Therefore, $s_k=\log\frac{p_k}{p_K}$ is an extrema of $R$. Now, for $i,i'\in C_{\bm{x}}$ and $j,j'\in\mathcal{N}_{\bm{x}}$, we have
{\small
\begin{align*}
&\mathbb{H}_{ii}=\nabla_{s_is_i}^2 R=
-\mathbb{E}_{\bm{x}} \sum_{j\in\mathcal{N}_{\bm{x}}}q_j
D_j
\frac{e^{s_i}(E_j-e^{s_i})}
{E_j^2},
\\
&\mathbb{H}_{ii'}=\nabla_{s_is_{i'}}^2 R=
\mathbb{E}_{\bm{x}}
\sum_{j\in\mathcal{N}_{\bm{x}}}q_j
D_j\frac{e^{s_i}e^{s_{i'}}}{E_j^2},
\\
&\mathbb{H}_{ij}=\mathbb{H}_{ji}=\nabla_{s_is_j}^2 R=
\nabla_{s_js_i}^2 R=
\mathbb{E}_{\bm{x}}
\sum_{j\in\mathcal{N}_{\bm{x}}}
D_j\frac{e^{s_i}e^{s_j}}{E_j^2},
\\
&\mathbb{H}_{jj}=\nabla_{s_js_j}^2 R=
-\mathbb{E}_{\bm{x}}\ 
D_j\frac{e^{s_j}(E_j-e^{s_j}/q_j)}{E_j^2},
\\
&\mathbb{H}_{jj'}=\nabla_{s_js_{j'}}^2 R=0,
\end{align*}
}\noindent
where
{\small
\begin{align*}
&D_j=p_K+\sum_{k'\in\mathcal{C}_{x}}p_{k'}+p_j/q_j.
\end{align*}
}\noindent
Now, we can write
{\small
\begin{align*}
\nabla_{s}^2 R
&=
\left[
\begin{array}{ccc|ccccc}
\mathbb{H}_{i_1 i_1} & \cdots & \mathbb{H}_{i_1 i_{|\mathcal{C}_{\bm{x}}|}} & 0 & \cdots & \mathbb{H}_{i_1 j} & \cdots & 0 \\
\cdots & \cdots & \cdots & \cdots & \cdots & \cdots & \cdots & \cdots \\
\mathbb{H}_{i_{|\mathcal{C}_{\bm{x}}|}i_1} & \cdots & \mathbb{H}_{i_{|\mathcal{C}_{\bm{x}}|} i_{|\mathcal{C}_{\bm{x}}|}} & 0 & \cdots & \mathbb{H}_{i_{|\mathcal{C}_{\bm{x}}|} j} & \cdots & 0 \\
\hline
0 & \cdots & 0 & 0 & \cdots & 0 & \cdots & 0 \\
\cdots & \cdots & \cdots & \cdots & \cdots & \cdots & \cdots & \cdots \\
\mathbb{H}_{j i_1} & \cdots & \mathbb{H}_{j i_{|\mathcal{C}_{\bm{x}}|}} & 0 & \cdots & \mathbb{H}_{jj} & \cdots & 0 \\
\cdots & \cdots & \cdots & \cdots & \cdots & \cdots & \cdots & \cdots \\
0 & \cdots & 0 & 0 & \cdots & 0 & \cdots & 0 \\
\end{array}
\right]
\\
&=-
\mathbb{E}_{\bm{x}}
\sum_{j\in\mathcal{N}_{\bm{x}}}q_j\frac{D_j}{E_j}
\left[
diag(\bm{v}_j)-\frac{1}{E_j}\bm{v}_j\bm{v}_j^\top
\right].
\end{align*}
}\noindent
where $\bm{v}_j=(e^{s_{i_1}}, \cdots, e^{s_{i_{|\mathcal{C}_{\bm{x}}|}}}, 0, \cdots, e^{s_{j}}/q_{j}, \cdots, 0)^\top$. Let
{\small
\[
\bm{A}_j=
diag(\bm{v}_j)-\frac{1}{E_j}\bm{v}_j\bm{v}_j^\top.
\]
}\noindent
For any non-zero vector $\bm{\varphi}=(\varphi_1,\cdots,\varphi_{K-1})^\top\in\mathbb{R}^{K-1}$, we have
{\myfont
\begin{align*}
\bm{\varphi}^\top\bm{A}_j\bm{\varphi}&=
\sum_{i\in\mathcal{C}_{\bm{x}}}e^{s_i}\varphi_i^2+\frac{e^{s_j}}{q_j}\varphi_j^2
-\frac{1}{E_j}\left(\sum_{i\in\mathcal{C}_{\bm{x}}}e^{s_i}\varphi_i+\frac{e^{s_j}}{q_j}\varphi_j\right)^2
\geq \frac{\left(\sum_{i\in\mathcal{C}_{\bm{x}}}e^{s_i}\varphi_i+\frac{e^{s_j}}{q_j}\varphi_j\right)^2}{\sum_{i\in\mathcal{C}_{\bm{x}}}e^{s_i}+\frac{e^{s_j}}{q_j}}-\frac{1}{E_j}\left(\sum_{i\in\mathcal{C}_{\bm{x}}}e^{s_i}\varphi_i+\frac{e^{s_j}}{q_j}\varphi_j\right)^2
>0,
\end{align*}
}\noindent
for every $j\in\mathcal{N}_{\bm{x}}$, where the first inequality is by the Cauchy-Schwarz inequality and the second inequality is because $0<\sum_{i\in\mathcal{C}_{\bm{x}}}e^{s_i}+\frac{e^{s_j}}{q_j}<E_j$. Therefore, $-\nabla_{s}^2 R=\mathbb{E}_{\bm{x}}\sum_{j\in\mathcal{N}_{\bm{x}}}q_j\frac{D_j}{E_j}\bm{A}_j$ is positive-definite and $R$ is strongly concave with respect to $s$. Hence, $s_k=\log \frac{p_k}{p_K}$ for $k=1,\cdots,K-1$ is the only maxima of $R$.
\end{proof}

\subsection{Proof of Theorem \ref{theo:con}}
\begin{proof}
$R$ can be re-written as 
{\small
\begin{align*}
R(\bm{\theta})=
\mathbb{E}_{\bm{x}} 
\sum_{k\in\mathcal{C}_{\bm{x}}}p_k
\sum_{j\in \mathcal{N}_{\bm{x}}}q_j
\log\frac{e^{s_k}}{E_j}
+\sum_{k\in\mathcal{N}_{\bm{x}}}p_k
\log\frac{e^{s_k}}
{E_k}
+p_K\sum_{j\in \mathcal{N}_{\bm{x}}}q_j
\log\frac{1}{E_j}.
\end{align*}
}
Note that $E_j$ for any $j$ can be viewed as a function of $\bm{s}=(s_1,\cdots,s_{K-1})^{\top}$. Define the following function 
{\small
\[
G(\bm{s})=\sum_{k\in\mathcal{C}_{\bm{x}}}p_k
\sum_{j\in \mathcal{N}_{\bm{x}}}q_j
\log E_j
+\sum_{k\in\mathcal{N}_{\bm{x}}}p_k
\log E_k
+p_K\sum_{j\in \mathcal{N}_{\bm{x}}}q_j
\log E_j,
\]
}
then for any $\bm{\theta}\neq\bm{\theta}^*$,
{\small
\begin{align*}
R(\bm{\theta}^*)-R(\bm{\theta})
&=\mathbb{E}_{\bm{x}} 
\sum_{k\in\mathcal{C}_{\bm{x}}}p_k
\sum_{j\in \mathcal{N}_{\bm{x}}}q_j
\left(\log\frac{E_j}{E_j^*}+s_k^*-s_k\right)
+\sum_{k\in\mathcal{N}_{\bm{x}}}p_k
\left(\log\frac{E_k}{E_k^*}+s_k^*-s_k\right)
+p_K\sum_{j\in \mathcal{N}_{\bm{x}}}q_j
\log\frac{E_j}{E_j^*}
\\
&=\mathbb{E}_{\bm{x}} 
\sum_{k\in\mathcal{C}_{\bm{x}}}p_k
\sum_{j\in \mathcal{N}_{\bm{x}}}q_j
\log\frac{E_j}{E_j^*}
+\sum_{k\in\mathcal{N}_{\bm{x}}}p_k
\log\frac{E_k}{E_k^*}
+p_K\sum_{j\in \mathcal{N}_{\bm{x}}}q_j
\log\frac{E_j}{E_j^*}
+\sum_{k=1}^{K-1}p_k(s_k^*-s_k)
\\
&=G(\bm{s})-G(\bm{s}^*)-\nabla G(\bm{s}^*)^{\top}(\bm{s}-\bm{s}^*)=\Delta(\bm{s},\bm{s}^*),
\end{align*}
}\noindent
where $\Delta(\bm{s},\bm{s}^*)$ is the Bregman divergence of the convex function $G(\bm{s})$. Since $G(\cdot)$ is convex, we have $\Delta(s,s^*)\geq0$ and $\Delta(\bm{s},\bm{s}^*)=0$ only when $\bm{s}=\bm{s}^*$. Under the assumption that the parameter space is compact and $\forall\bm{\theta}\neq\bm{\theta}^*$ we have $\mathbb{P}_{\mathcal{X}}\left(s_k(\bm{x},\bm{\theta})\neq s_k(\bm{x},\bm{\theta}^*)\right)>0$ for $k\neq K$, we know that $R(\bm{\theta})<R(\bm{\theta}^*)$ for any $\bm{\theta}\neq\bm{\theta}^*$. 

Given any $\varepsilon'>0$, there exists $\varepsilon>0$ that $R(\bm{\theta}^*)-R(\bm{\theta})<\varepsilon$ implies $\|\bm{\theta}^*-\bm{\theta}\|<\varepsilon'$. Now according to Lemma \ref{lem:1}, there exists a $\delta>0$, when $n\to\infty$, we have
{\small
\begin{align*}
R(\bm{\theta}^*)-R(\hat{\bm{\theta}})
=R(\bm{\theta}^*)-\hat{R}_n(\bm{\theta}^*)+\hat{R}_n(\bm{\theta}^*)-R(\hat{\bm{\theta}})
&\leq R(\bm{\theta}^*)-\hat{R}_n(\bm{\theta}^*)+\hat{R}_n(\hat{\bm{\theta}})-R(\hat{\bm{\theta}})
\\
&\leq |R(\bm{\theta}^*)-\hat{R}_n(\bm{\theta}^*)|+|\hat{R}_n(\hat{\bm{\theta}})-R(\hat{\bm{\theta}})|
<2\delta.
\end{align*}
}\noindent
This implies that $\|\hat{\bm{\theta}}-\bm{\theta}^*\|<\delta'$ for any $\delta'>0$.
\end{proof}

\subsection{Proof of Theorem \ref{theo:norm}}

\begin{proof}
By the Mean Value Theorem, we have
{\small
\begin{align}
\sqrt{n}(\hat{\bm{\theta}}-\bm{\theta}^*)=-\nabla^2\hat{R}_n(\bar{\bm{\theta}})^{-1}\sqrt{n}\nabla\hat{R}_n(\bm{\theta}^*) ,
\end{align}
}\noindent
where $\bar{\bm{\theta}}=t\bm{\theta}^*+(1-t)\hat{\bm{\theta}}$ for some $t\in[0,1]$. Note that Lemma \ref{lem:1} implies that $\nabla^2 \hat{R}_n(\bar{\bm{\theta}})^{-1}$ converges to $\nabla^2 R(\bar{\bm{\theta}})^{-1}$ in probability; moreover, $\hat{\bm{\theta}}\rightarrow \bm{\theta}^*$ in probability and hence $\bar{\bm{\theta}}\rightarrow \bm{\theta}^*$ in probability. By the Slutsky's Theorem, the limit distribution of $\sqrt{n}(\hat{\bm{\theta}}-\bm{\theta}^*)$ is given by
{\small
\[
-\nabla^2 R(\bm{\theta}^*)^{-1}\sqrt{n}\nabla\hat{R}_n(\bm{\theta}^*) .
\]
}\noindent
Observe that $\sqrt{n}\nabla\hat{R}_n(\bm{\theta}^*)$ is the sum of $n$ i.i.d. random vectors with mean $\mathbb{E}\sqrt{n}\nabla\hat{R}_n(\bm{\theta}^*)=\sqrt{n}\mathbb{E}\nabla R(\bm{\theta}^*)=0$, and the variance of $\sqrt{n}(\hat{\bm{\theta}}-\bm{\theta}^*)$ is
{\small
\begin{align*}
Var\left(\sqrt{n}(\hat{\bm{\theta}}-\bm{\theta}^*)\right)=\nabla^2 R(\bm{\theta}^*)^{-1}Var\left(\sqrt{n}\nabla\hat{R}_n(\bm{\theta}^*)\right)\nabla^2 R(\bm{\theta}^*)^{-1}.
\end{align*}
}\noindent
From the proof of Theorem 1, we have
{\small
\begin{align}
\nabla^2R(\bm{\theta})&=-\mathbb{E}_{\bm{x}}
\bm{\nabla}\left[
\sum_{j\in\mathcal{N}_{\bm{x}}}q_j
\frac{D_j}{E_j}\bm{A}_j
\right]
\bm{\nabla}^\top,
\label{eq:H}
\end{align}
}\noindent
where
{\small
\[
\bm{\nabla}=
diag\left(
\left(
\nabla_{i_1},\cdots,\nabla_{i_{|\mathcal{C}_{\bm{x}}|}},\nabla_{j_1},\cdots,
\nabla_{j_{|\mathcal{N}_{\bm{x}}|}}\right)^\top
\right) 
\]
}\noindent
and $\nabla_{k}=\nabla_{\bm{\theta}}s_k$.

Measuring $\nabla^2R(\bm{\theta})$ at $\bm{\theta}^*$, we have
{\small
\begin{align}
\nabla^2R(\bm{\theta}^*)
&=-\mathbb{E}_{\bm{x}}
\bm{\nabla}\bm{M}\bm{\nabla}^\top
\label{eq:Hstar}
\end{align}
}\noindent
where
{\small
\[
\bm{M}=\sum_{j\in\mathcal{N}_{\bm{x}}}q_j
\left[diag(\bm{u}_j)-\frac{1}{D_j}\bm{u}_j\bm{u}_j^\top\right],
\]
}\noindent
where $\bm{u}_j=(p_{i_1}, \cdots, p_{i_{|\mathcal{C}_{\bm{x}}|}}, 0, \cdots, p_{j}/q_{j}, \cdots, 0)^\top$. 
By following the proof of Theorem 1, it is easy to show that $\bm{M}\succ0$ is positive definite. 

Next, we derive $Var\left(\sqrt{n}\nabla\hat{R}_n(\bm{\theta}^*)\right)$. Introduce some Bernoulli variables $Q_j$ for $j\in\mathcal{N}_{\bm{x}}$ with $p(Q_j=1|\bm{x})=q_j$. Now, for $i,i'\in C_{\bm{x}}$ and $j,j'\in\mathcal{N}_{\bm{x}}$, we have
{\small
\begin{flalign*}
\mathbb{V}_{ii}
&=Var\left(\nabla_{i}\hat{R}_n(\bm{\theta}^*), \nabla_{i}\hat{R}_n(\bm{\theta}^*) \right)
\\
&=\mathbb{E}_{\bm{x},Q}\ 
Q
\Bigg[
p_i\left(1-\frac{e^{s_i^*}}{1+\sum_{k'\in\mathcal{C}_{\bm{x}}}e^{s_{k'}^*}+e^{s_j^*}/q_j}\right)^2
+(D_j-p_i)\left(
\frac{e^{s_i^*}}{1+\sum_{k'\in\mathcal{C}_{\bm{x}}}e^{s_{k'}^*}+e^{s_j^*}/q_j}
\right)^2
\Bigg]\cdot \nabla_i \nabla_i^\top
\\
&=\mathbb{E}_{\bm{x}}\ 
\sum_{j\in\mathcal{N}_{\bm{x}}}q_j
\frac{p_i(D_j-p_i)}{D_j}
\cdot \nabla_i \nabla_i^\top,
\\
\\
\mathbb{V}_{ii'}
&=Var\left(\nabla_{i}\hat{R}_n(\bm{\theta}^*), \nabla_{i'}\hat{R}_n(\bm{\theta}^*) \right)
=\mathbb{E}_{\bm{x},Q} \ 
Q
\Bigg[
(D_j-p_i-p_{i'})\frac{p_ip_{i'}}{D_j^2}
-p_i(1-\frac{p_i}{D_j})\frac{p_{i'}}{D_j}
-p_{i'}(1-\frac{p_{i'}}{D_j})\frac{p_i}{D_j}
\Bigg]
\cdot \nabla_i \nabla_{i'}^\top
\\
&=-\mathbb{E}_{\bm{x}}\ 
\sum_{j\in\mathcal{N}_{\bm{x}}}q_j
\frac{p_ip_{i'}}{D_j}
\cdot \nabla_i \nabla_{i'}^\top.
\\
\\
\mathbb{V}_{jj}
&=Var\left(\nabla_{j}\hat{R}_n(\bm{\theta}^*), \nabla_{j}\hat{R}_n(\bm{\theta}^*) \right)
=\mathbb{E}_{\bm{x},Q} \ 
Q\Bigg[
\frac{p_j}{q_j}
\left(1-\frac{p_j/q_j}{D_j}\right)^2
+\left(D_j-p_j/q_j\right)\frac{p_j^2/q_j^2}{D_j^2}
\Bigg]
\cdot \nabla_j \nabla_j^\top
\\
&=\mathbb{E}_{\bm{x}}
\sum_{j\in\mathcal{N}_{\bm{x}}}
\frac{p_j\left(D_j-p_j/q_j\right)}{D_j}
\cdot \nabla_j \nabla_j^\top.
\\
\\
\mathbb{V}_{jj'}&=\bm{0}.
\\
\\
\mathbb{V}_{ij}
&=\mathbb{V}_{ji}
=Var\left(\nabla_{i}\hat{R}_n(\bm{\Theta}^*), \nabla_{j}\hat{R}_n(\bm{\Theta}^*) \right)
\\
&=\mathbb{E}_{\bm{x},\bm{Q}} \ 
Q
\Bigg[
\left(D_j-p_i-p_j/q_j\right)\frac{p_ip_j/q_j}{D_j^2}
-p_i\left(1-\frac{p_i}{D_j}\right)\frac{p_j/q_j}{D_j}
-p_j/q_j\left(1-\frac{p_j/q_j}{D_j}\right)\frac{p_i}{D_j}
\Bigg]
\cdot \nabla_i \nabla_{i'}^\top
\\
&=-\mathbb{E}_{\bm{x}}\ 
\sum_{j\in\mathcal{N}_{\bm{x}}}
\frac{p_ip_j}{D_j}
\cdot \nabla_i \nabla_{i'}^\top.
\end{flalign*}
}

Now, the variance can be written as
{\small
\begin{align*}
V(\bm{\theta}^*)
&=Var\left(\sqrt{n}\nabla\hat{R}_n(\bm{\theta}^*)\right)
\\
&=
\left[
\begin{array}{ccc|ccccc}
\mathbb{V}_{i_1 i_1} & \cdots & \mathbb{V}_{i_1 i_{|\mathcal{C}_{\bm{x}}|}} & 0 & \cdots & \mathbb{V}_{i_1 j} & \cdots & 0 \\
\cdots & \cdots & \cdots & \cdots & \cdots & \cdots & \cdots & \cdots \\
\mathbb{V}_{i_{|\mathcal{C}_{\bm{x}}|}i_1} & \cdots & \mathbb{V}_{i_{|\mathcal{C}_{\bm{x}}|} i_{|\mathcal{C}_{\bm{x}}|}} & 0 & \cdots & \mathbb{V}_{i_{|\mathcal{C}_{\bm{x}}|} j} & \cdots & 0 \\
\hline
0 & \cdots & 0 & 0 & \cdots & 0 & \cdots & 0 \\
\cdots & \cdots & \cdots & \cdots & \cdots & \cdots & \cdots & \cdots \\
\mathbb{V}_{j i_1} & \cdots & \mathbb{V}_{j i_{|\mathcal{C}_{\bm{x}}|}} & 0 & \cdots & \mathbb{V}_{jj} & \cdots & 0 \\
\cdots & \cdots & \cdots & \cdots & \cdots & \cdots & \cdots & \cdots \\
0 & \cdots & 0 & 0 & \cdots & 0 & \cdots & 0 \\
\end{array}
\right].
\end{align*}
}\noindent
By comparing $\nabla^2 R(\bm{\theta}^*)$ and $V(\bm{\theta}^*)$, we immediately have $-\nabla^2 R(\bm{\theta}^*)=V(\bm{\theta}^*)$ and hence
{\small
\begin{align*}
	Var\left(\sqrt{n}(\hat{\bm{\theta}}-\bm{\theta}^*)\right)=\left[\mathbb{E}_{\bm{x}}\bm{\nabla}\bm{M}\bm{\nabla}^\top\right]^{-1}.
\end{align*}
}
\end{proof}

\subsection{Proof of Corollary \ref{theo:IV}}
\begin{proof}
By following the proof of Theorem \ref{theo:norm}, it is easy to show that the statistical variance of the softmax logistic regression in Eq. (\ref{eq:softmax}) is $[\mathbb{E}_{\bm{x}}\bm{\nabla}\bm{M}^{mle}\bm{\nabla}^\top]^{-1}$ (with $s_K=0$ fixed), where
{\small
\[
\bm{M}^{mle}=
diag\left(
\left[
\begin{array}{c}
p_1 \\
\vdots \\
p_{K-1} \\
\end{array}
\right]
\right)
-
\left[
\begin{array}{c}
p_1 \\
\vdots \\
p_{K-1} \\\end{array}
\right]
\left[
\begin{array}{c}
p_1 \\
\vdots \\
p_{K-1} \\\end{array}
\right]^\top.
\]
}\noindent
When $\sum_{k\in\mathcal{C}_{\bm{x}}\cup\{K\}}p(k,\bm{x})\rightarrow1$, we have $\sum_{j'\in\mathcal{N}_{\bm{x}}}p_{j'}\rightarrow0$ and $D_j\rightarrow1$. Then,
{\myfont
\begin{align*}
\bm{M}=
diag\left(
\left[
\begin{array}{c}
p_{i_1} \\
\vdots \\
p_{i_{|\mathcal{C}_{\bm{x}}|}} \\
p_{j_1} \\
\vdots \\
p_{j_{|\mathcal{N}_{\bm{x}}|}} \\
\end{array}
\right]
\right)
-
\left[
\begin{array}{ccc|ccc}
p_{i_1}p_{i_1} & \cdots & p_{i_1}p_{i_{|\mathcal{C}_{\bm{x}}|}} & 
p_{i_1}\sum_{j'\in\mathcal{N}_{\bm{x}}}p_{j'} & \cdots & p_{i_1}\sum_{j'\in\mathcal{N}_{\bm{x}}}p_{j'} \\
\cdots & \cdots & \cdots & \cdots & \cdots & \cdots \\
p_{i_{|\mathcal{C}_{\bm{x}}|}}p_{i_1} & \cdots & p_{i_{|\mathcal{C}_{\bm{x}}|}}p_{i_{|\mathcal{C}_{\bm{x}}|}} & p_{i_{|\mathcal{C}_{\bm{x}}|}}\sum_{j'\in\mathcal{N}_{\bm{x}}}p_{j'} & \cdots & p_{i_{|\mathcal{C}_{\bm{x}}|}}\sum_{j'\in\mathcal{N}_{\bm{x}}}p_{j'} \\
\hline
p_{i_1}\sum_{j'\in\mathcal{N}_{\bm{x}}}p_{j'} & \cdots & p_{i_{|\mathcal{C}_{\bm{x}}|}}\sum_{j'\in\mathcal{N}_{\bm{x}}}p_{j'} & p_{j_1}^2/q_{j_1} & \cdots & 0 \\
\cdots & \cdots & \cdots & \cdots & \cdots & \cdots \\
p_{i_1}\sum_{j'\in\mathcal{N}_{\bm{x}}}p_{j'} & \cdots & p_{i_{|\mathcal{C}_{\bm{x}}|}}\sum_{j'\in\mathcal{N}_{\bm{x}}}p_{j'} & 0 & \cdots & p_{j_{|\mathcal{N}_{\bm{x}}|}}^2/q_{j_{|\mathcal{N}_{\bm{x}}|}} \\
\end{array}
\right].
\end{align*}
}\noindent
If we arrange the index order in $\bm{M}^{mle}$ according to the index order in $\bm{M}$ and denote $\bm{\Delta}=\bm{M}-\bm{M}^{mle}$, we have
{\small
\[
\bm{\Delta}=\left[
\begin{array}{cc}
\bm{\Delta}_1 & \bm{\Delta}_2 \\
\bm{\Delta}_2^\top & \bm{\Delta}_3 \\
\end{array}
\right]\rightarrow\bm{0},
\]
}
because
{\small
\begin{align*}
&\bm{\Delta}_1=\bm{0},
\\
&
\bm{\Delta}_2=
\left[
\begin{array}{ccc} 
p_{i_1}(p_{j_1}-\sum_{j'\in\mathcal{N}_{\bm{x}}}p_{j'}) & \cdots & p_{i_1}(p_{j_{|\mathcal{N}_{\bm{x}}|}}-\sum_{j'\in\mathcal{N}_{\bm{x}}}p_{j'}) \\
\cdots & \cdots & \cdots \\
p_{i_{|\mathcal{C}_{\bm{x}}|}}(p_{j_1}-\sum_{j'\in\mathcal{N}_{\bm{x}}}p_{j'}) & \cdots & p_{i_{|\mathcal{C}_{\bm{x}}|}}(p_{j_{|\mathcal{N}_{\bm{x}}|}}-\sum_{j'\in\mathcal{N}_{\bm{x}}}p_{j'}) \\
\end{array}
\right]\rightarrow\bm{0},
\\
&
\bm{\Delta}_3=
\left[
\begin{array}{ccc}
p_{j_1}^2(1-1/q_{j_1}) & \cdots & p_{j_1}p_{j_{|\mathcal{N}_{\bm{x}}|}} \\
\cdots & \cdots & \cdots \\
p_{j_{|\mathcal{N}_{\bm{x}}|}}p_{j_1} & \cdots & p_{j_{|\mathcal{N}_{\bm{x}}|}}^2(1-1/q_{j_{|\mathcal{N}_{\bm{x}}|}}) \\
\end{array}
\right]\rightarrow\bm{0}.
\end{align*}
}\noindent
This completes the proof.
\end{proof}

\section{The Beam Search Algorithm}
The beam search algorithm used in both training and testing is depicted in Algorithm \ref{algo:beamsearch}.
\begin{algorithm}[h]
\small
\caption{The Beam Search Algorithm.}
\label{algo:beamsearch}
\begin{algorithmic}[1]
\STATE \textbf{Input:} The root of the tree, input data point $\bm{x}$ and Beam width $J$.
\STATE \textbf{Output:} The $J$ candidate classes.
\vskip 0.1in

\STATE Initialize stack $\mathcal{S}\leftarrow root$ and stack $\mathcal{S}'\leftarrow \emptyset$;
\STATE Initialize the candidate class set $\mathcal{E}\leftarrow \emptyset$;
\WHILE {true}
\IF {$\mathcal{S}$ is empty}
\STATE Break;
\ENDIF
\FOR {$i=1$ to $\mathcal{S}.size()$}
\IF {$\mathcal{S}_i$ is a leaf}
\STATE $\mathcal{E}.pushback(\mathcal{S}_i)$;
\ELSE
\FOR {$c=1$ to $\mathcal{S}_i.Child.size()$}
\STATE Accumulate the score to $\mathcal{S}_i.Child(c)$;
\STATE $\mathcal{S}'.pushback(\mathcal{S}_i.Child(c))$;
\ENDFOR
\ENDIF
\ENDFOR
\STATE $\mathcal{S}.clear()$;
\IF {$\mathcal{S}'.size()>J$}
\STATE // \emph{Using the max heap.} 
\STATE Find the top-$J$ nodes with the highest accumulated scores in $\mathcal{S}'$ and push them into $\mathcal{S}$;
\ELSE
\STATE $\mathcal{S}\leftarrow \mathcal{S}'$;
\ENDIF
\STATE $\mathcal{S}'.clear()$;
\ENDWHILE
\STATE // \emph{Using the max heap.} 
\STATE Return the top-$J$ classes with the highest scores in $\mathcal{E}$;
\end{algorithmic}
\end{algorithm}

\section{A Hierarchical Clustering Method for Generating the Tree Structure}
 
Given the data points of a dataset, we can obtain the center, i.e., the average data point, of each class by scanning the data once and get $\bar{\bm{X}}\in\mathbb{R}^{K\times d}$, where $K$ is the number of classes and $d$ is the feature dimension. Then, a hierarchical clustering algorithm in Algorithm \ref{algo:hct} is performed by viewing each row of $\bar{\bm{X}}$ as a separate data point. In Algorithm \ref{algo:hct}, the function `Split(root)' in step 16 has already constructed a $b$-nary tree, which can be used by the Beam Tree Algorithm. However, the clustering algorithm, e.g., the $k$-means algorithm, may generate imbalanced clusters in step 9, and the resulting $b$-nary tree in step 16 may be imbalanced and affect the efficiency of Beam Tree. A simple way to fix this problem is to fetch the labels (leaves) in the tree in step 16 from left to right, where the obtained label order maintains a rough similarity relationship among the classes. We then assign the ordered labels to the leaves of a new balanced $b$-nary tree from left to right.
\begin{algorithm}[!ht]
\small
\caption{A Hierarchical Clustering Algorithm for Generating the Tree over Class Labels.}
\label{algo:hct}
\begin{algorithmic}[1]
\STATE \textbf{Input:} $K$, $b$ and $\bar{\bm{X}}$.
\STATE \textbf{Output:} a $b$-nary tree.
\vskip 0.1in

\STATE \textbf{Function} Split(node $o$)
\WHILE {true}
\IF {$o$ is assigned with only one label}
\STATE $o.isleaf=true$;
\STATE Return;
\ENDIF
\STATE Perform any clustering algorithm, e.g., k-means, on the labels associated with the node $o$ and obtain $b$ clusters $\{\mathcal{L}_1,\cdots,\mathcal{L}_b\}$;
\STATE Split $o$ into $b$ children $\{o_1,\cdots,o_b\}$ and assign the label clusters $\{\mathcal{L}_1,\cdots,\mathcal{L}_b\}$ to them respectively;
\FOR {$i=1$ to $b$}
\STATE Split($o_i$);
\ENDFOR
\ENDWHILE
\vskip 0.1in
\STATE Assign root with all labels $\{1,2,\cdots,K\}$;
\STATE Split(root);
\STATE Get the label order in the leaves from left to right;
\STATE Assign the labels to the leaves of a new balanced $b$-nary tree from left to right;

\STATE Return the balanced $b$-nary tree;
\end{algorithmic}
\end{algorithm}

\section{Experimental Details}
\begin{figure*}[!ht]
\centering
\subfigure{
\includegraphics[scale=0.6]{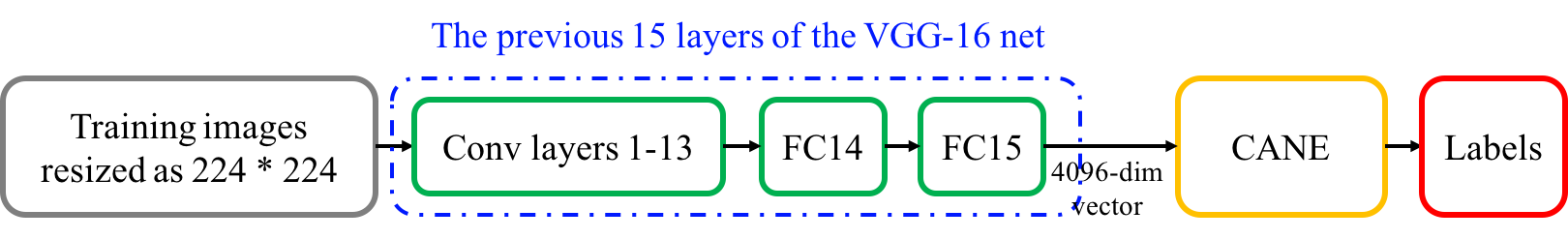}
}
\caption{\small The neural network structure used for the ImageNet datasets. `FC' indicates fully-connected layer.}
\label{fig:vgg}
\end{figure*}
Hyper-parameter tuning is computationally expensive. In order to efficiently select a good setting of the hyper-parameters, we let each method process half epoch of the training data and use another 10\% held-out subset of the training set to tune hyper-parameters. For every classifier, the learning rate $\eta$ needs to be tuned. For the LOMTree method, by following \cite{choromanska2015logarithmic}, we choose the number of the internal nodes in its binary tree from a set $\{K-1, 4K-1, 16K-1, 64K-1\}$, and tune the swap resistance from $\{4, 16, 64, 256\}$. The Recall Tree method has a default setting for large class problem in \cite{daume2016logarithmic}, which is also adopted in the experiments.

The VGG-16 network structure used in ImageNet-2010 and ImageNet-10K datasets is provided in Fig. \ref{fig:vgg}. Parameters of Conv layers 1-13, FC14 and FC15 are pre-trained on the ImageNet 2012 dataset.

\end{document}